\documentclass{article}

    \PassOptionsToPackage{numbers, compress}{natbib}

\usepackage[main, final]{neurips_2026}

\usepackage[utf8]{inputenc} 
\usepackage[T1]{fontenc}    
\usepackage[colorlinks=true, linkcolor=purple, citecolor=green!60!black]{hyperref}       
\usepackage{url}            
\usepackage{booktabs}       
\usepackage{amsfonts}       
\usepackage{nicefrac}       
\usepackage{microtype}      
\usepackage{xcolor}         

\usepackage{tikz}
\usepackage{graphicx}
\usepackage{caption}
\usepackage{subcaption}
\usepackage{pifont}
\usepackage{algorithm}
\usepackage{algorithmic}

\usepackage{amsmath}
\usepackage{amssymb}
\usepackage{mathtools}
\usepackage{amsthm}
\usepackage{thmtools}
\usepackage{thm-restate}

\makeatletter
\@ifundefined{newcounteralias}{}{%
  \renewcommand\thmt@autorefsetup{%
    \@xa\def
    \csname\thmt@envname autorefname\@xa\endcsname
    \@xa{\thmt@thmname}%
  }%
}
\makeatother

\usepackage{tabularx}
\usepackage{enumitem}

\usepackage{multirow}
\usepackage{adjustbox}
\usepackage{wrapfig}

\makeatletter
\newcommand\footnoteref[1]{\protected@xdef\@thefnmark{\ref{#1}}\@footnotemark}
\makeatother

\usepackage[capitalize]{cleveref}
 \AtBeginDocument{%
    \crefname{figure}{Figure}{Figures}%
}
\newcommand{\eqlabelleft}{(}
\newcommand{\eqlabelright}{)}

\creflabelformat{equation}{\eqlabelleft#2#1#3\eqlabelright}

\theoremstyle{plain}

\theoremstyle{definition}

\theoremstyle{remark}

\newcommand{\appropto}{\mathrel{\vcenter{
  \offinterlineskip\halign{\hfil$##$\cr
    \propto\cr\noalign{\kern2pt}\sim\cr\noalign{\kern-2pt}}}}}

\newcommand{\xmark}{\ding{55}}  

\newcommand{\rateinline}[2]{#1 \color{gray}{\scriptsize $\pm$ #2}}

\usepackage{amsmath,amsfonts,bm}

\def\eqref#1{equation~\ref{#1}}

\def\1{\bm{1}}

\def\rvx{{\mathbf{x}}}
\def\rvy{{\mathbf{y}}}

\def\mC{{\bm{C}}}

\def\mK{{\bm{K}}}

\def\mP{{\bm{P}}}

\DeclareMathAlphabet{\mathsfit}{\encodingdefault}{\sfdefault}{m}{sl}
\SetMathAlphabet{\mathsfit}{bold}{\encodingdefault}{\sfdefault}{bx}{n}

\def\emC{{C}}

\def\emP{{P}}

\title{WASD: Wasserstein-based Knowledge Distillation for Large Language Models}

\author{
Byeonghu Na\textsuperscript{\textmd{1}} \quad Donghyeok Shin\textsuperscript{\textmd{1}} \quad Yeongmin Kim\textsuperscript{\textmd{1}} \quad Mina Kang\textsuperscript{\textmd{1}} \quad Il-Chul Moon\textsuperscript{\textmd{1,2}} \\
\textsuperscript{\textmd{1}}KAIST, \textsuperscript{\textmd{2}}summary.ai\\
\texttt{\{byeonghu.na,tlsehdgur0,alsdudrla10,kasong13,icmoon\}@kaist.ac.kr}\\
}

\begin{document}

\maketitle

\begin{abstract}
  Autoregressive large language models (LLMs) have rapidly advanced in capability, but their increasing scale comes with substantial computational and memory costs at inference time. Knowledge distillation (KD) offers a practical solution by transferring knowledge from a large teacher model to a smaller student model via alignment of discrete probability distributions. However, existing KD methods for LLMs primarily rely on divergences that evaluate discrepancies through probability values at each vocabulary index, without explicitly leveraging token-level semantic information. We propose Wasserstein-based knowledge distillation (WASD) for LLMs, which incorporates token-level semantic information via the Wasserstein-based distance with a cost matrix derived from token embeddings. To ensure computational tractability, we adopt the Sinkhorn divergence and derive a gradient-equivalent objective that can be efficiently optimized without introducing additional networks. Experiments across multiple LLM families and scales show that WASD consistently improves distillation performance on diverse tasks, including instruction following, mathematical reasoning, and code generation. Our results highlight the importance of semantic information encoded in the token space for effective distribution alignment in LLM distillation. The implementation is publicly available at \url{https://github.com/aailab-kaist/WASD}.
\end{abstract}

\begin{figure}[t]   
    \centering
    \begin{subfigure}[b]{0.71\linewidth}
        \centering
        \includegraphics[width=\linewidth]{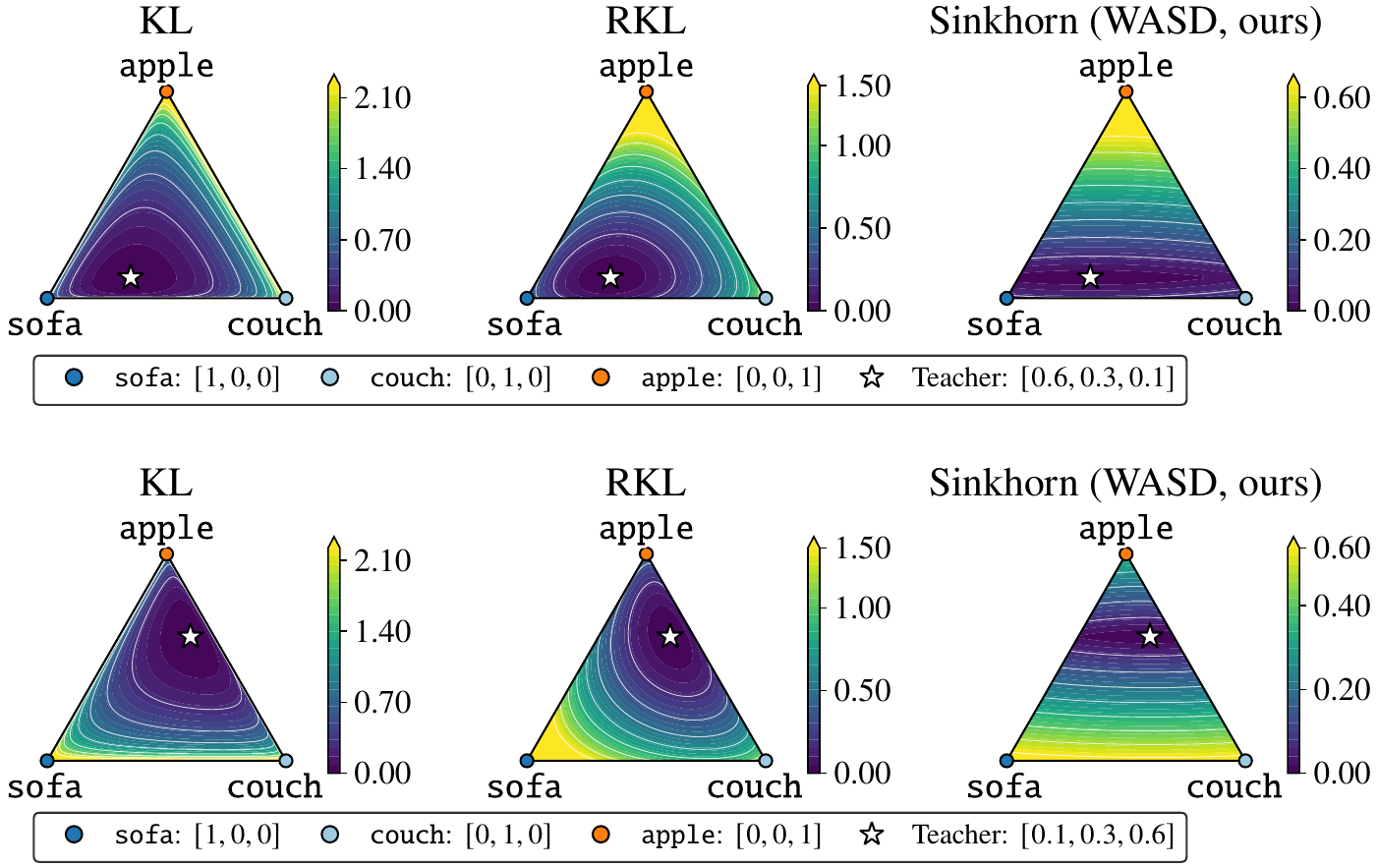}
        \caption{Loss contours of various divergences on the probability simplex}
        \label{subfig:motivation_loss_surfaces}
    \end{subfigure}
    \hfill 
    \begin{subfigure}[b]{0.27\linewidth}
        \centering
        \includegraphics[width=\linewidth]{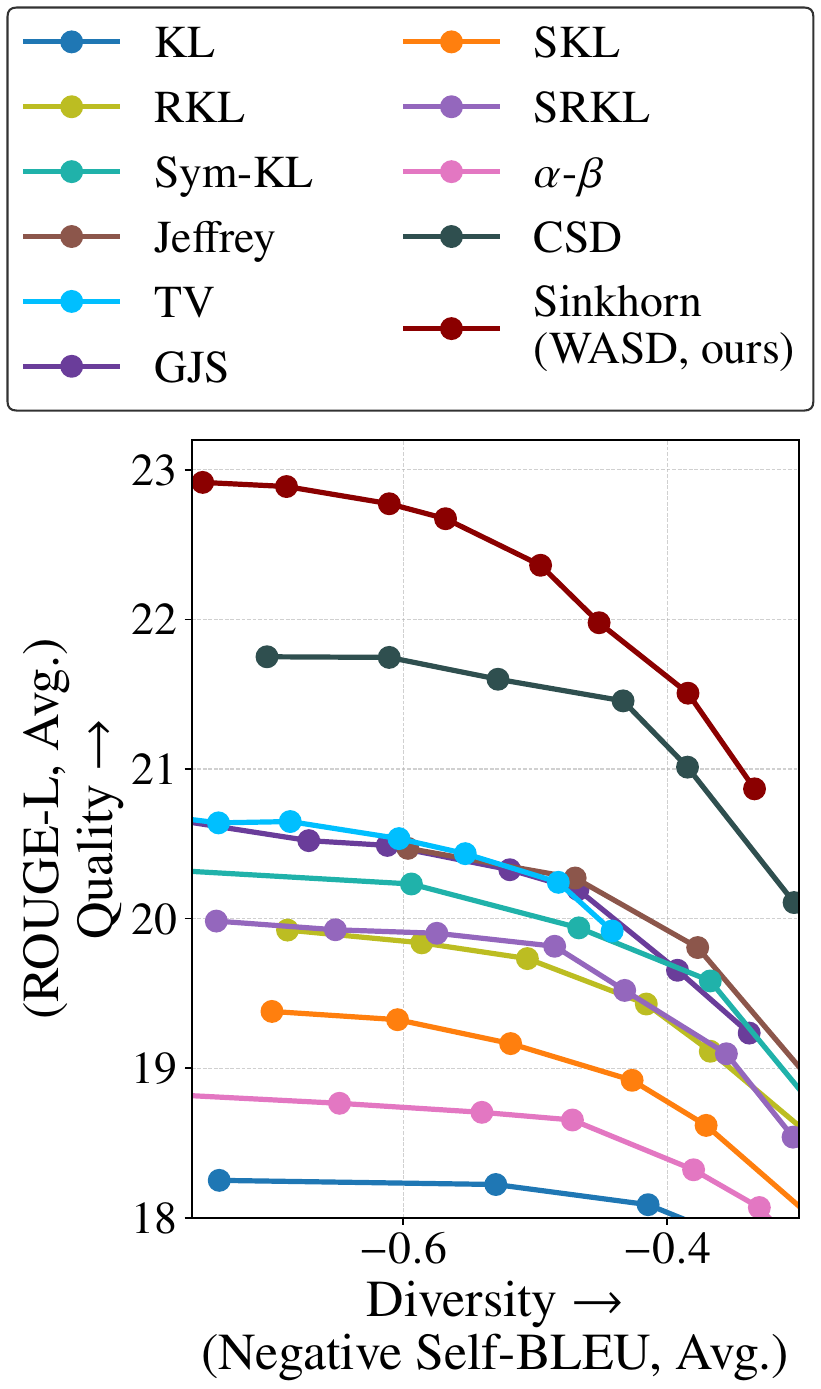}
        \caption{Quality-diversity trade-off}
        \label{subfig:motivation_trade_off}
    \end{subfigure}
    \caption{Motivation for Wasserstein-based distillation (WASD). (a) Loss contours on the probability simplex for a toy next-token prediction task (``\textit{What do you sit on in the living room?}'') with candidates \texttt{\{sofa, couch, apple\}}, where we visualize divergence between fixed teacher and varying student distributions. Unlike other divergences, Sinkhorn divergence used in WASD reflects semantic relationships between tokens. (b) Quality-diversity trade-off (ROUGE-L vs. Self-BLEU) on five instruction-following benchmarks for \texttt{GPT-2} (1.5B $\rightarrow$ 0.1B), varying the decoding temperature.}
    \label{fig:motivation}
\end{figure}

\section{Introduction}
\label{sec:intro}

Autoregressive large language models (LLMs) have demonstrated remarkable capabilities and are now widely used across a broad range of applications~\cite{hui2024qwen2,hurst2024gpt,team2024gemma,touvron2023llama2}. This success has been largely driven by scaling model size and training data, but it also incurs substantial inference costs in terms of computation and memory~\cite{hoffmann2022training,kaplan2020scaling}. Such costs pose significant challenges for practical deployment, particularly in latency-sensitive or resource-constrained settings~\cite{alizadeh2024llm,zitkovich2023rt}. Therefore, a central problem is how to preserve the performance of large models while reducing their size.

Knowledge distillation (KD) offers a promising solution by transferring knowledge from a large teacher model to a smaller student model~\cite{hinton2015distilling}. In LLMs, this process can be formulated as aligning the per-token discrete probability distributions of the teacher and the student. From this perspective, a key design choice is the divergence used to compare these distributions. Most existing approaches rely on $f$-divergences and their smoothed variants~\cite{agarwal2024onpolicy,gu2024minillm,ko2024distillm,ko2025distillm2,shing2025taid,wen2023f}, since they can be computed directly from the probability values of the two distributions. However, these divergences are computed from probability values assigned to vocabulary indices, without explicitly using the semantic information carried by tokens.
To illustrate this limitation, we consider a simple toy next-token prediction task. Given a prompt such as ``\textit{What do you sit on in the living room?}'', we examine distributions over tokens including \{\texttt{sofa, couch, apple}\} and visualize the resulting loss landscapes under different divergences. As shown in \cref{subfig:motivation_loss_surfaces}, commonly used divergences are determined by the probability assignments to vocabulary indices and do not distinguish tokens with similar meanings, such as \texttt{sofa} and \texttt{couch}. As a result, changing the teacher distribution alters the loss surface according to probability values, rather than the meaning of the corresponding tokens.

To address this limitation, we propose Wasserstein-based knowledge distillation (WASD), which incorporates token-level semantic information through a transport-based discrepancy. Specifically, we define a transport cost from token embeddings and adopt a Wasserstein-based Sinkhorn divergence to align teacher and student distributions. As illustrated in \cref{subfig:motivation_loss_surfaces}, the resulting loss landscape assigns lower cost to probability shifts between tokens with similar meaning while penalizing shifts toward unrelated tokens. This semantic-aware alignment encourages the student model to preserve meaningful alternatives during training. Empirically, this leads to improved trade-offs between output quality and diversity compared to other divergences, as shown in \cref{subfig:motivation_trade_off}.

Wasserstein-based objectives provide desirable properties but require solving an optimal transport problem. Moreover, entropy regularization makes transport computation tractable but can shift the optimum away from the teacher distribution, which is undesirable for knowledge distillation. To address these issues, we adopt the debiased Sinkhorn divergence and derive a computationally tractable optimization procedure that uses dual potentials computed via matrix scaling iterations, without introducing additional neural networks or requiring backpropagation through the transport solver.
Extensive experiments across multiple LLM families, including GPT-2~\cite{radford2019language}, OpenLLaMA2~\cite{touvron2023llama2}, Gemma~\cite{team2024gemma}, and Qwen2.5~\cite{hui2024qwen2}, and across different model scales demonstrate that WASD consistently improves distillation performance on diverse tasks, such as instruction following, mathematical reasoning, and code generation. These results suggest the semantic information encoded in the token space is an important ingredient for effective knowledge transfer in LLM distillation.

\section{Preliminaries}
\label{sec:prelim}

\begin{table}[tp]
    \centering
    \caption{Divergences used in LLM distillation. ``\textit{Token info}'' indicates whether the objective uses token information beyond probability values. For Sinkhorn, the dual potentials $\phi_i^{*}$ encode semantic information through token-embedding costs, and $\operatorname{sg}$ denotes the stop-gradient operator.}
    \adjustbox{max width=\linewidth}{%
    \begin{tabular}{lcl}
        \toprule
        Divergence & Token info & Distillation objective   \\
        \midrule
        KL~(\citeauthor{hinton2015distilling}, \citeyear{hinton2015distilling}) & \xmark & $ \sum_{i} p(i) \log \frac{p(i)}{q_{\bm{\theta}} (i)} $ \\
        Reverse KL (RKL)~(\citeauthor{gu2024minillm}, \citeyear{gu2024minillm}) & \xmark & $\sum_{i} q_{\bm{\theta}}(i) \log \frac{q_{\bm{\theta}}(i)}{p (i)} $ \\ 
        TV~(\citeauthor{wen2023f}, \citeyear{wen2023f}) & \xmark & $ \sum_{i} | q_{\bm{\theta}}(i)  - p(i) | $ \\
        Generalized JS (GJS)~(\citeauthor{agarwal2024onpolicy}, \citeyear{agarwal2024onpolicy}) & \xmark & $\sum_i [ \lambda p(i)\log\frac{p(i)}{\lambda p(i)+(1-\lambda)q_{\bm{\theta}}(i)}+(1-\lambda)q_{\bm{\theta}}(i)\log\frac{q_{\bm{\theta}}(i)}{\lambda p(i)+(1-\lambda)q_{\bm{\theta}}(i)}]$ \\ 
        Skew KL (SKL)~(\citeauthor{ko2024distillm}, \citeyear{ko2024distillm}) & \xmark & $ \sum_{i} p(i) \log \frac{p(i)}{\lambda p(i) + (1-\lambda) q_{\bm{\theta}} (i)} $ \\
        Skew RKL (SRKL)~(\citeauthor{ko2024distillm}, \citeyear{ko2024distillm}) & \xmark & $\sum_{i} (\lambda p(i) + (1-\lambda) q_{\bm{\theta}} (i)) \log \frac{\lambda p(i) + (1-\lambda) q_{\bm{\theta}} (i)}{p(i)} $ \\
        $\alpha$-$\beta$~(\citeauthor{wang2025abkd}, \citeyear{wang2025abkd}) & \xmark & $\sum_i -\frac{1}{\alpha\beta} [ p(i)^{\alpha} q_{\bm{\theta}}(i)^{\beta} - \frac{\alpha}{\alpha+\beta} p(i)^{\alpha+\beta} - \frac{\beta}{\alpha+\beta} q_{\bm{\theta}}(i)^{\alpha+\beta}  ]$ \\
        \textbf{Sinkhorn~(ours)} & \checkmark & $\sum_{i} \operatorname{sg}(\phi_i^{*(p,q_{\bm{\theta}})} - \phi_i^{*(q_{\bm{\theta}},q_{\bm{\theta}})}) q_{\bm{\theta}}(i)$  \\
        \bottomrule
    \end{tabular}
    }
    \label{tab:lst_divergence}
\end{table}

\subsection{Knowledge distillation for large language models}
\label{subsec:prelim_kd_llm}

We consider knowledge distillation (KD) for autoregressive large language models (LLMs), where the goal is to transfer knowledge from a large teacher model $p(\rvy|\rvx)$ to a smaller student model $q_{\bm{\theta}}(\rvy|\rvx)$. Here, $\rvx$ denotes the input prompt, $\rvy$ is the output response, and $\bm{\theta}$ is the parameters of the student model. The output $\rvy = (y_1, \dots, y_L)$ is a token sequence with length $L$, where each token $y_l \in \mathcal{V}$ belongs to a fixed vocabulary $\mathcal{V}$. Following prior work~\cite{kim2026distillation,ko2024distillm,shin2026amid}, we assume that the teacher and student share the same vocabulary. Autoregressive LLMs factorize the conditional distribution as $q_{\bm{\theta}}(\rvy|\rvx) = \prod_{l=1}^L q_{\bm{\theta}}(y_l|\rvx,\rvy_{<l})$, where $\rvy_{<l} = (y_1, ...,y_{l-1})$. Thus, the model predicts a next-token distribution at each decoding step.
Under this formulation, KD can be viewed as aligning the student's next-token distribution with that of the teacher. This leads to the following general objective:
\begin{align}
    \min_{{\bm{\theta}}} \mathbb{E}_{(\rvx,\rvy) \sim \mathcal{D}} \left [ \textstyle\sum_{l=1}^{L} D \Big ( p (y_{l} | \rvx, \rvy_{<l}),  q_{\bm{\theta}} (y_{l} | \rvx, \rvy_{<l})  \Big ) \right ], \label{eq:KD_general}
\end{align}
where $D(\cdot, \cdot)$ denotes a divergence between distributions. The dataset $\mathcal{D}$ consists of input-output pairs and can be constructed from predefined datasets~\cite{hinton2015distilling} or model-generated samples~\cite{agarwal2024onpolicy,ko2024distillm,lin2020autoregressive}.

The choice of divergence $D$ is a central design component in KD for LLMs, as it determines how the student aligns with the teacher distribution. \cref{tab:lst_divergence} summarizes commonly used divergences in prior work. Most existing methods adopt divergences computed from probability values at corresponding vocabulary indices, including KL divergence~\cite{hinton2015distilling,kim2016sequence}, reverse KL (RKL) divergence~\cite{gu2024minillm}, $f$-divergence~\cite{wen2023f}, generalized Jensen-Shannon (GJS) divergence~\cite{agarwal2024onpolicy}, adaptive KL divergence~\cite{wu2025rethinking}, and $\alpha$-$\beta$-divergence~\cite{wang2025abkd}. These divergences are widely used because they are tractable to compute from the probability values of the two distributions.

Despite these advantages, such divergences do not explicitly leverage token-level semantic information~\cite{na2026semanticaware}. In addition, token distributions in LLMs are highly sparse, with many near-zero probabilities, which can lead to numerical instability for divergences that depend on density ratios~\cite{ko2024distillm,shin2026amid}. To mitigate such instability, recent works have proposed alternative strategies, such as introducing an assistant distribution~\cite{ko2024distillm,ko2025distillm2,shin2026amid,shing2025taid} or aligning logits instead of probabilities~\cite{kim2026distillation}. While these approaches improve optimization stability, they still rely on index-wise comparisons and do not explicitly use semantic information encoded in the token space.

\subsection{Wasserstein distance and Sinkhorn divergence}

These limitations motivate alternative discrepancy measures for LLM distillation. The Wasserstein distance compares probability distributions by measuring the cost of transporting probability mass across the underlying space. Formally, for discrete distributions $p$ and $q_{\bm{\theta}}$, it is defined as
\begin{align}
    D_{\text{W}} \left ( p ,  q_{\bm{\theta}}  \right ) := \textstyle \min_{\mP \in U( p, q_{\bm{\theta}} )} \langle \mP, \mC \rangle,  \label{eq:wass}
\end{align}
where $U( p, q_{\bm{\theta}} )$ denotes the set of valid transport plans whose marginals match $p$ and $q_{\bm{\theta}}$, and $\mC$ is a cost matrix specifying pairwise transport costs. This formulation compares distributions through a transport plan whose cost can reflect information encoded in the support.



In practice, directly computing the Wasserstein distance is computationally expensive in large-scale settings. A common approach is to introduce entropy regularization, which enables efficient computation through iterative matrix scaling methods~\cite{cuturi2013sinkhorn,cuturi2014fast,peyre2019computational,villani2008optimal}. However, entropy regularization introduces a bias such that the transport cost between identical distributions is no longer zero. To address this issue, we adopt the Sinkhorn divergence, which removes this bias through self-similarity correction while retaining the computational advantages of entropy-regularized optimal transport~\cite{genevay2018learning}. 
Sinkhorn divergences have been studied in generative modeling and distribution alignment due to their scalability and favorable optimal transport properties~\cite{feydy2019interpolating,genevay2018learning,sanjabi2018convergence}. In the context of LLM distillation, such transport-based discrepancies provide a principled way to incorporate relationships within the vocabulary. Building on this perspective, we introduce a Sinkhorn-based distillation objective for autoregressive LLMs, whose detailed formulations and derivations are presented in \cref{subsec:method_objective}.

\paragraph{Wasserstein distance in knowledge distillation}
Wasserstein-based distances have been used in KD to transfer structured knowledge. Earlier works mainly target representations or features~\cite{bhardwaj2022knot,chen2021wasserstein,lohit2022model}. Recent methods use Wasserstein-based distillation losses: SinKD~\cite{cui2024sinkhorn} aligns sample-level geometry, and WKD~\cite{lv2024wasserstein} exploits class interrelations tailored for vision models. They have also been used for cross-tokenizer LLM distillation~\cite{boizard2025towards,cui2025multi,vuong2026mcw}, which addresses vocabulary mismatch or heterogeneous alignment and is complementary to our shared-vocabulary setting.
Outside KD, WPR~\cite{na2026semanticaware} uses token-embedding geometry for Wasserstein-based policy regularization in preference learning. WASD builds on this geometric perspective for autoregressive teacher-student distribution matching and adopts a debiased Sinkhorn divergence to address the entropic bias of regularized transport. A detailed comparison with WPR and discussion of other related work are provided in \cref{app_sec:related_work}.

 
\section{Method}
\label{sec:method}


\subsection{From Wasserstein distance to Sinkhorn divergence for LLM distillation}
\label{subsec:method_objective}

\paragraph{Incorporating token-level semantic information via Wasserstein distance}

We instantiate the general distillation objective in \cref{eq:KD_general} using the Wasserstein distance $D_{\text{W}}$ in \cref{eq:wass} to incorporate token-space information into the distillation objective. In particular, the cost matrix $\mC$, which encodes pairwise distances between tokens, allows the transport cost to depend on information encoded in token embeddings. In practice, we construct $\mC$ using distances between the teacher model's token embeddings (e.g., cosine or Euclidean distance), in line with the goal of transferring knowledge from the teacher to the student. We compute this cost matrix once and keep it fixed throughout training.

\paragraph{Entropy-regularized Wasserstein distance}
A key challenge is that computing the exact Wasserstein distance requires solving a large-scale linear optimization problem, which becomes computationally intractable as the support size increases~\cite{kuhn2019wasserstein}. To alleviate this issue, a common approach is to regularize the transport problem with an entropy term, resulting in the entropy-regularized Wasserstein distance~\cite{cuturi2013sinkhorn}:
\begin{gather}
    \mathcal{L}_{\tilde{\text{W}}}^{\epsilon}(\bm{\theta}) := \mathbb{E}_{(\rvx,\rvy) \sim \mathcal{D}} \left [ \textstyle\sum_{l=1}^{L} \tilde{D}_{{\text{W}}}^{\epsilon} \Big ( p (y_{l} | \rvx, \rvy_{<l}),  q_{\bm{\theta}} (y_{l} | \rvx, \rvy_{<l})  \Big ) \right ], \label{eq:KD_entW} \\
    \text{where }
    {\tilde{D}}_{{\text{W}}}^{\epsilon}\Big ( p (y_{l} | \rvx, \rvy_{<l}),  q_{\bm{\theta}} (y_{l} | \rvx, \rvy_{<l})  \Big ) :=  \textstyle \min_{\mP^{(l)} \in U_l( p, q_{\bm{\theta}} )} \left \{ \langle \mP^{(l)}, \mC \rangle - \epsilon \mathcal{H}(\mP^{(l)}) \right \}. \label{eq:entropic_wass}
\end{gather}
Here, $U_l( p, q_{\bm{\theta}} ) := \left \{ \mP^{(l)} \in \mathbb{R}_{+}^{d\times d} | \mP^{(l)} \bm{1}_d = q_{\bm{\theta}} ( \cdot | \rvx, \rvy_{<l}), {\mP^{(l)}}^{\top} \bm{1}_d = p ( \cdot | \rvx, \rvy_{<l})  \right \}$ denotes the set of valid transport plans, $\mC \in \mathbb{R}_{+}^{d\times d}$ is the cost matrix, $d$ is the vocabulary size, and $\epsilon$ is an entropy regularization hyperparameter. The entropy term is defined as $\mathcal{H}(\mP) := -\sum_{i=1}^{d}\sum_{j=1}^{d} \emP_{ij} (\log \emP_{ij} - 1)$, which corresponds to the Shannon entropy up to an additive constant. For notational simplicity, we omit the dependence of $\mP^{(l)}$ on $(\rvx,\rvy_{<l})$.

\begin{wrapfigure}[10]{R}{0.27\textwidth}
    \vspace{-1.5em}
    \centering
    \includegraphics[width=\linewidth]{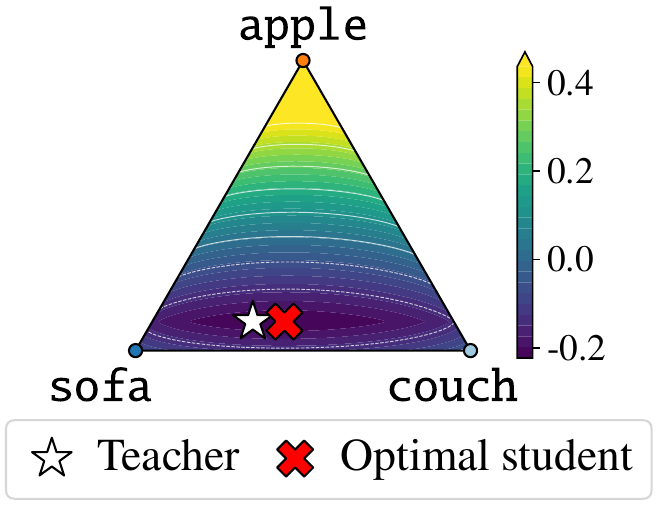}
    \caption{Loss contour of entropy-regularized Wasserstein distance.}
    \label{fig:ent_wass}
\end{wrapfigure}
\paragraph{Bias induced by entropy regularization}
By introducing entropy regularization, the resulting objective can be computed efficiently using the Sinkhorn-Knopp algorithm~\cite{sinkhorn1967concerning}.
However, this regularization introduces an entropic bias: for a fixed teacher distribution $p$, the regularized transport objective is not generally minimized at $q_{\bm{\theta}}=p$.
Consequently, minimizing the objective in \cref{eq:KD_entW} does not guarantee that the student distribution $q_{\bm{\theta}}$ matches the teacher distribution $p$, which is the central goal of knowledge distillation. As illustrated in \cref{fig:ent_wass} on the toy example introduced in \cref{sec:intro}, the optimal solution of the entropy-regularized Wasserstein objective deviates from the teacher distribution, highlighting the bias introduced by entropy regularization.

\paragraph{Sinkhorn divergence for proper distillation}
To address this limitation, we adopt the Sinkhorn divergence~\cite{feydy2019interpolating,genevay2018learning}, defined as
\begin{gather}
    \mathcal{L}_{\text{S}}^{\epsilon}(\bm{\theta}) := \mathbb{E}_{(\rvx,\rvy) \sim \mathcal{D}} \left [  \textstyle\sum_{l=1}^{L} {D}_{{\text{S}}}^{\epsilon} \Big ( p (y_{l} | \rvx, \rvy_{<l}),  q_{\bm{\theta}} (y_{l} | \rvx, \rvy_{<l})  \Big ) \right ], \label{eq:KD_sink} \\
    \text{where }
    {D}_{{\text{S}}}^{\epsilon}( p ,  q_{\bm{\theta}} ) := {\tilde{D}}_{{\text{W}}}^{\epsilon}( p,  q_{\bm{\theta}})
    - \frac{1}{2} {\tilde{D}}_{{\text{W}}}^{\epsilon}( q_{\bm{\theta}},  q_{\bm{\theta}})
    - \frac{1}{2} {\tilde{D}}_{{\text{W}}}^{\epsilon}( p, p).\label{eq:sinkhorn}
\end{gather}
Under suitable conditions on the transport cost, the Sinkhorn divergence in \cref{eq:sinkhorn} is nonnegative and equals zero if and only if $p=q_{\bm{\theta}}$. Then, its unique minimizer over student distributions is the teacher distribution, making \cref{eq:KD_sink} a proper distillation objective. The entropy regularization parameter $\epsilon$ interpolates between different regimes: as $\epsilon \to 0$, $D_{\text{S}}^{\epsilon}$ recovers the Wasserstein distance $D_{\text{W}}$; while as $\epsilon \to \infty$, it approaches a kernel-based discrepancy, recovering the squared Maximum Mean Discrepancy (MMD) for suitable choices of the cost matrix~\cite{feydy2019interpolating}.

\subsection{WASD: Wasserstein-based knowledge distillation for LLMs}
\label{subsec:wasd}

We derive a tractable optimization procedure for the Sinkhorn distillation objective in \cref{eq:KD_sink}, which can be directly optimized with standard gradient-based methods, without introducing additional model components. Since the Sinkhorn divergence ${D}_{{\text{S}}}^{\epsilon}$ is defined in terms of entropy-regularized Wasserstein distances, we begin by reformulating the entropy-regularized Wasserstein distance ${\tilde{D}}_{{\text{W}}}^{\epsilon}$ to obtain a tractable distillation formulation. 

We consider the dual formulation of the entropy-regularized optimal transport problem in \cref{eq:entropic_wass}.
The corresponding dual problem can be written as follows~\cite{cuturi2013sinkhorn,na2026semanticaware}:
\begin{align}
    \max_{\bm{\phi},\bm{\psi}} \sum_{i=1}^{d} \phi_i q_{\bm{\theta}}(y_l=i|\rvx,\rvy_{<l}) + \sum_{j=1}^{d} \psi_j p(y_l=j|\rvx,\rvy_{<l}) - \sum_{i=1}^{d} \sum_{j=1}^{d} \epsilon \exp\!\big(\frac{1}{\epsilon} (\phi_i + \psi_j - C_{ij} ) \big),\label{eq:eot_dual}
\end{align}
where $\bm{\phi} = \{\phi_i\}_{i=1}^d$ and $\bm{\psi} = \{\psi_j\}_{j=1}^d$ are dual variables associated with the marginal constraints in $U_l$, ensuring that the row and column sums of $\mP^{(l)}$ match $q_{\bm{\theta}}$ and $p$, respectively. These variables depend on $(\rvx,\rvy_{<l})$, which we omit for notational simplicity.
By strong duality, the optimal primal and dual solutions admit the following form~\citep{cuturi2014fast}: there exist scaling vectors $\mathbf{u}, \mathbf{v} \in \mathbb{R}^{d}_{+}$ such that
\begin{align}
    {\mP^{(l)}}^* = \mathrm{diag}(\mathbf{u}) \exp (-\mC/\epsilon) \mathrm{diag}(\mathbf{v}), \quad \bm{\phi}^* = \epsilon \log (\mathbf{u})+t\mathbf{1}_d, \quad \bm{\psi}^* = {\epsilon} \log (\mathbf{v})-t\mathbf{1}_d,
    \label{eq:optimal_sol}
\end{align}
for $t \in \mathbb{R}$. Here, $\exp$ and $\log$ are applied element-wise. The dual solutions are unique up to an additive constant. This indeterminacy does not affect the dual objective, since both marginals have unit mass and the shifts cancel.

Substituting the optimal solution into the dual objective in \cref{eq:eot_dual} yields the following equivalent expression for the entropy-regularized Wasserstein distance.
\begin{restatable}{lemma}{lema}
\label{thm:entW_objective}   
    Let $\bm{\phi}^{*(p,q_{\bm{\theta}})}(\rvx,\rvy_{<l})$ and $\bm{\psi}^{*(p,q_{\bm{\theta}})}(\rvx,\rvy_{<l})$ denote the optimal dual variables of the entropic optimal transport problem corresponding to ${\tilde{D}}_{{\text{W}}}^{\epsilon}( p,  q_{\bm{\theta}})$. Then,
    \begin{align}
        {\tilde{D}}_{{\text{W}}}^{\epsilon}(p&(y_l|\rvx,\rvy_{<l}), q_{\bm{\theta}}(y_l|\rvx,\rvy_{<l}))  \label{eq:KD_entWd} \\ 
        &= \textstyle\sum_{i=1}^{d} \phi_i^{*(p,q_{\bm{\theta}})}(\rvx,\rvy_{<l}) q_{\bm{\theta}}(y_l=i|\rvx,\rvy_{<l})+
        \sum_{j=1}^{d} \psi_j^{*(p,q_{\bm{\theta}})}(\rvx,\rvy_{<l}) p(y_l=j|\rvx,\rvy_{<l}) - \epsilon. \nonumber
    \end{align}
\end{restatable}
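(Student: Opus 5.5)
The plan is to combine strong duality for the entropic problem in \cref{eq:entropic_wass} with the first-order optimality (marginal) conditions satisfied by the dual optimum. This should collapse the exponential penalty term in the dual objective \cref{eq:eot_dual} to the constant $\epsilon$.

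First, I will invoke strong duality. The primal objective $\langle \mP, \mC\rangle - \epsilon\mathcal{H}(\mP)$ is strictly convex and the constraints $U_l(p,q_{\bm{\theta}})$ are affine. Slater's condition holds because the product plan $q_{\bm{\theta}} p^\top$ is feasible and, restricted to the supports, strictly positive. Hence ${\tilde{D}}_{{\text{W}}}^{\epsilon}(p,q_{\bm{\theta}})$ equals the optimal value of \cref{eq:eot_dual}, attained at $(\bm{\phi}^{*},\bm{\psi}^{*})$. This dual is the Lagrangian dual after minimizing over $\mP$. The stationarity condition $C_{ij} + \epsilon\log P_{ij} - \phi_i - \psi_j = 0$, coming from the entropy convention $-\mathcal{H}(\mP)=\sum P_{ij}(\log P_{ij}-1)$, gives $P_{ij} = \exp((\phi_i+\psi_j-C_{ij})/\epsilon)$. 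Substituting this back yields exactly \cref{eq:eot_dual}. It is also consistent with \cref{eq:optimal_sol}: taking $u_i = e^{\phi_i^*/\epsilon}$ and $v_j = e^{\psi_j^*/\epsilon}$, the constant $t$ cancels.

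Second, I will evaluate the penalty term at the optimum. By \cref{eq:optimal_sol}, $\exp((\phi^*_i+\psi^*_j-C_{ij})/\epsilon) = u_i e^{-C_{ij}/\epsilon} v_j = {P^{(l)}_{ij}}^*$. Therefore
\[
\sum_{i,j}\epsilon \exp\!\big(\tfrac{1}{\epsilon}(\phi^*_i+\psi^*_j-C_{ij})\big) = \epsilon \sum_{i,j} {P^{(l)}_{ij}}^* = \epsilon\, \bm{1}_d^\top q_{\bm{\theta}}(\cdot|\rvx,\rvy_{<l}) = \epsilon,
\]
because ${\mP^{(l)}}^*\in U_l$ and $q_{\bm{\theta}}$ sums to one. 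The same conclusion follows without \cref{eq:optimal_sol}: setting $\partial/\partial\phi_i$ of \cref{eq:eot_dual} to zero gives $\sum_j \exp(\cdot) = q_{\bm{\theta}}(i)$, and summing over $i$ again gives $1$. Plugging this constant into the dual objective gives \cref{eq:KD_entWd}.

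The main obstacle is the justification of strong duality and attainment when $p$ or $q_{\bm{\theta}}$ has zero entries. In that case the dual potentials on zero-mass tokens are not finite, or are only determined in a limiting sense. I would handle this by restricting the problem to the supports of $p$ and $q_{\bm{\theta}}$. Zero-mass indices contribute nothing to the linear terms. For them one may set $u_i = 0$ (that is, $\phi_i^*=-\infty$, with the convention $0\cdot(-\infty)=0$), or simply note that softmax outputs are strictly positive so the issue does not arise for the student. The remaining argument is the routine computation above.
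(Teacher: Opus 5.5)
Your proposal is correct and follows essentially the same route as the paper: invoke strong duality, substitute the optimal potentials, and use the optimality form $P^*_{ij}=\exp\bigl((\phi_i^*+\psi_j^*-C_{ij})/\epsilon\bigr)$ together with the unit total mass of the optimal plan in $U_l$ to reduce the exponential term to $-\epsilon$. Your extra justifications go beyond the paper, which simply asserts strong duality and assumes strictly positive marginals only in the proof of the companion proposition. These are Slater's condition via the product plan, the alternative argument through the first-order condition in $\phi_i$, and the treatment of zero-mass entries.
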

The proof is provided in \cref{app_subsec:proof_entW}. 
The optimal dual variables depend implicitly on $\bm{\theta}$ through $q_{\bm{\theta}}$. However, the envelope theorem~\citep{milgrom2002envelope} allows us to compute the gradient of the optimal transport value without differentiating through these variables, as formalized below.
\begin{restatable}{proposition}{gradient}
\label{thm:gradient}
    Let $\bm{\phi}^{*(p,q_{\bm{\theta}})}(\rvx,\rvy_{<l})$ denote the optimal dual variables associated with the student marginal. Then, the gradient of the entropy-regularized Wasserstein distance can be expressed as
    \begin{align}
        & \nabla_{\bm{\theta}} \tilde{D}_{\text{W}}^{\epsilon} (p(y_l|\rvx,\rvy_{<l}), q_{\bm{\theta}}(y_l|\rvx,\rvy_{<l})) = \textstyle\sum_{i=1}^{d}  \phi_i^{*(p,q_{\bm{\theta}})}(\rvx,\rvy_{<l})\nabla_{\bm{\theta}}  q_{\bm{\theta}}(y_l=i|\rvx,\rvy_{<l}) . \label{eq:entW_gradient} 
    \end{align}
\end{restatable}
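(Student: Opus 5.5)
The approach is the envelope theorem applied to the dual problem in \cref{eq:eot_dual}. Fix a decoding step $l$ and write $q = q_{\bm{\theta}}(\cdot|\rvx,\rvy_{<l})$ and $p = p(\cdot|\rvx,\rvy_{<l})$. Define the dual objective
\begin{align*}
F(\bm{\phi},\bm{\psi}; q) := \textstyle\sum_i \phi_i q_i + \sum_j \psi_j p_j - \epsilon \sum_{i,j} \exp\!\big(\tfrac{1}{\epsilon}(\phi_i+\psi_j-C_{ij})\big).
\end{align*}
By strong duality, $\tilde{D}_{\text{W}}^{\epsilon}(p,q) = \max_{\bm{\phi},\bm{\psi}} F(\bm{\phi},\bm{\psi};q)$. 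The value function is therefore a maximum of functions that are affine in $q$. This makes it convex in $q$, with $\partial F/\partial q_i = \phi_i$.

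\textbf{Step 1: differentiate in $q$.} I will show that, as a function of $q$ in the interior of the simplex, $\tilde{D}_{\text{W}}^{\epsilon}$ is differentiable with gradient $\bm{\phi}^{*}$, up to the additive constant $t\mathbf{1}_d$. To do this I apply the envelope theorem of \citep{milgrom2002envelope}, or equivalently Danskin's theorem. Its hypotheses are that $F$ is jointly continuous and differentiable in $q$, and that the optimizer is unique modulo the known degeneracy. A cleaner route is to use \cref{thm:entW_objective}, which gives $\tilde{D}_{\text{W}}^{\epsilon} = \langle \bm{\phi}^*, q\rangle + \langle \bm{\psi}^*, p\rangle - \epsilon$. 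I then differentiate directly and show the implicit terms vanish. The derivative of $F$ with respect to $(\bm{\phi},\bm{\psi})$ is zero at the optimum, by the first-order conditions: these are exactly the marginal constraints satisfied by $\mP^*$ in \cref{eq:optimal_sol}. Hence
\begin{align*}
dF(\bm{\phi}^*(q),\bm{\psi}^*(q);q)=\langle \bm{\phi}^*, dq\rangle .
\end{align*}
This requires the maps $q\mapsto \bm{\phi}^*(q),\bm{\psi}^*(q)$ to be differentiable. I get that from the implicit function theorem, after fixing the gauge (say $\psi_d=0$). In that gauge the Hessian of the strictly concave reduced dual is nondegenerate, because $\exp(-\mC/\epsilon)$ has positive entries.

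\textbf{Step 2: handle the additive constant.} The gradient in $q$ is defined only up to $t\mathbf{1}_d$, but this does not affect the claimed formula. We have $\sum_i q_{\bm{\theta}}(y_l=i|\cdot)=1$, so $\sum_i \nabla_{\bm{\theta}} q_{\bm{\theta}}(i)=0$, and the term $t\sum_i\nabla_{\bm{\theta}} q_i$ drops out.

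\textbf{Step 3: apply the chain rule.} I compose with the differentiable map $\bm{\theta}\mapsto q_{\bm{\theta}}$, which is a softmax of network logits. This gives $\nabla_{\bm{\theta}}\tilde{D}_{\text{W}}^{\epsilon}=\sum_i \phi_i^{*}\nabla_{\bm{\theta}} q_{\bm{\theta}}(i)$, which is \cref{eq:entW_gradient}.

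The main obstacle is justifying differentiability of the optimal potentials, or equivalently uniqueness of the maximizer modulo the gauge. Since softmax outputs are strictly positive, $q$ lies in the open simplex. There the reduced dual is smooth and strictly concave, so I would invoke the implicit function theorem as in Step 1. As a fallback, Danskin's theorem needs only compactness of a set containing the maximizers, together with uniqueness of the optimal $\bm{\phi}$ modulo constants, which is enough for the simplex-restricted gradient.
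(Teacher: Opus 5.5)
Your proposal is correct and takes essentially the same route as the paper. You apply the envelope theorem to the entropic dual, so only the explicit term $\sum_i \phi_i q_{\bm{\theta}}(i)$ contributes, and the gauge shift $t\mathbf{1}_d$ drops out because $\sum_i \nabla_{\bm{\theta}} q_{\bm{\theta}}(i)=0$. Your gauge-fixed implicit-function-theorem argument supplies the regularity of $\bm{\phi}^{*}$ that the paper takes for granted; like the paper, you should also state that the teacher $p$ is strictly positive, since otherwise the dual optimum is not attained at finite potentials.
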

The proof is provided in \cref{app_subsec:proof_gradient}. \cref{thm:gradient} shows that the gradient of the entropy-regularized Wasserstein distance can be expressed as a weighted sum of student probability gradients, with weights given by the optimal dual variables. This allows gradient-based optimization without differentiating through the transport solver.
We now extend this formulation to the Sinkhorn distillation objective in \cref{eq:KD_sink}. The self-transport terms in \cref{eq:sinkhorn} can be treated analogously using \cref{thm:entW_objective,thm:gradient}, leading to the following result.
\begin{restatable}{theorem}{thma}
\label{thm:sink_objective}   
    Let $\bm{\phi}^{*(p,q_{\bm{\theta}})}(\rvx,\rvy_{<l})$ and $\bm{\phi}^{*(q_{\bm{\theta}},q_{\bm{\theta}})}(\rvx,\rvy_{<l})$ denote the optimal dual variables corresponding to ${\tilde{D}}_{{\text{W}}}^{\epsilon}( p,  q_{\bm{\theta}})$ and ${\tilde{D}}_{{\text{W}}}^{\epsilon}( q_{\bm{\theta}},  q_{\bm{\theta}})$, respectively. Define
    \begin{align}
        \mathcal{L}_{\text{WASD}}^{\epsilon}(\bm{\theta}) := \mathbb{E}_{(\rvx,\rvy) \sim \mathcal{D}} \Bigg [ \sum_{l=1}^{L} \sum_{i=1}^{d} \operatorname{sg} \Big  (\phi_i^{*(p,q_{\bm{\theta}})}(\rvx,\rvy_{<l}) -  \phi_i^{*(q_{\bm{\theta}},q_{\bm{\theta}})}(\rvx,\rvy_{<l}) \Big ) q_{\bm{\theta}}(y_l=i|\rvx,\rvy_{<l}) \Bigg] ,  \label{eq:KD_wasd}
    \end{align}
    where $\operatorname{sg}$ denotes the stop-gradient operator. Then, under the stop-gradient convention, the gradient of $\mathcal{L}_{\textrm{WASD}}^{\epsilon}(\bm{\theta})$ coincides with that of the Sinkhorn objective $\mathcal{L}_{{\text{S}}}^{\epsilon}(\bm{\theta})$, i.e., $\nabla_{\bm{\theta}} \mathcal{L}_{\textrm{WASD}}^{\epsilon}(\bm{\theta})= \nabla_{\bm{\theta}} \mathcal{L}_{{\text{S}}}^{\epsilon}(\bm{\theta})$.
\end{restatable}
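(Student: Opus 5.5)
We will prove the identity termwise, at a fixed prompt $(\rvx,\rvy)$ and position $l$; summation over $l$ and the expectation then follow by linearity. Throughout we assume the usual interchange of $\nabla_{\bm{\theta}}$ and $\mathbb{E}_{\mathcal{D}}$. Applying the gradient to \cref{eq:sinkhorn} splits $\nabla_{\bm{\theta}} D_{\text{S}}^{\epsilon}(p,q_{\bm{\theta}})$ into three pieces. The last term, $\tilde{D}_{\text{W}}^{\epsilon}(p,p)$, does not depend on $\bm{\theta}$, so its gradient is zero. The first term is handled directly by \cref{thm:gradient}:
\[
\nabla_{\bm{\theta}} \tilde{D}_{\text{W}}^{\epsilon}(p,q_{\bm{\theta}}) = \textstyle\sum_i \phi_i^{*(p,q_{\bm{\theta}})} \nabla_{\bm{\theta}} q_{\bm{\theta}}(i).
\]

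\textbf{The self-transport term.} The main step is to show that
\[
\nabla_{\bm{\theta}} \tilde{D}_{\text{W}}^{\epsilon}(q_{\bm{\theta}},q_{\bm{\theta}}) = 2\textstyle\sum_i \phi_i^{*(q_{\bm{\theta}},q_{\bm{\theta}})} \nabla_{\bm{\theta}} q_{\bm{\theta}}(i).
\]
Here $q_{\bm{\theta}}$ enters both marginals, so the envelope argument behind \cref{thm:gradient} must be applied to the dual \cref{eq:eot_dual} with $p$ replaced by $q_{\bm{\theta}}$. The dual objective is linear in the marginals, and the optimal dual pair is a maximizer. Hence the envelope theorem gives
\[
\nabla_{\bm{\theta}} \tilde{D}_{\text{W}}^{\epsilon}(q_{\bm{\theta}},q_{\bm{\theta}}) = \textstyle\sum_i (\phi_i^{*}+\psi_i^{*})\nabla_{\bm{\theta}} q_{\bm{\theta}}(i).
\]
To reduce this to $2\sum_i \phi_i^*\nabla q$ we need $\bm{\psi}^*=\bm{\phi}^*$. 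We get this from symmetry. Because $\mC$ is built from a symmetric distance between teacher embeddings, $\mC=\mC^\top$. Swapping $\bm{\phi}$ and $\bm{\psi}$ then leaves the self-transport dual objective unchanged, so $(\bm{\psi}^*,\bm{\phi}^*)$ is also optimal. By the uniqueness of dual solutions up to the shift $t$ in \cref{eq:optimal_sol}, $\bm{\psi}^*=\bm{\phi}^*+c\mathbf{1}$ for some constant $c$.

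\textbf{Removing the shift.} This step also handles the general additive indeterminacy. Since $\sum_i q_{\bm{\theta}}(i)=1$ for all $\bm{\theta}$, we have $\sum_i \nabla_{\bm{\theta}} q_{\bm{\theta}}(i)=0$. Any constant added to the potentials therefore contributes nothing to the gradient. This gives $\sum_i(\phi_i^*+\psi_i^*)\nabla q = 2\sum_i\phi_i^*\nabla q$. The same observation shows that the choice of normalization $t$ in \cref{eq:optimal_sol}, for both potentials used in $\mathcal{L}_{\text{WASD}}^{\epsilon}$, does not affect the result.

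\textbf{Combining.} Putting the three pieces together,
\[
\nabla_{\bm{\theta}} D_{\text{S}}^{\epsilon} = \textstyle\sum_i\big(\phi_i^{*(p,q_{\bm{\theta}})}-\phi_i^{*(q_{\bm{\theta}},q_{\bm{\theta}})}\big)\nabla_{\bm{\theta}} q_{\bm{\theta}}(i),
\]
where the factor $\tfrac12$ cancels the $2$ from the self-transport term. Under the stop-gradient convention, the potentials inside $\operatorname{sg}$ in \cref{eq:KD_wasd} are treated as constants. Differentiating $\mathcal{L}_{\text{WASD}}^{\epsilon}$ therefore produces exactly this expression at each $(\rvx,\rvy,l)$, which proves the theorem.

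\textbf{Main obstacle.} The delicate step is the self-transport term: justifying the envelope theorem when the same $\bm{\theta}$ enters both marginals, and obtaining $\bm{\psi}^*=\bm{\phi}^*$ up to a constant. That step relies on the symmetry of $\mC$ and on uniqueness of the dual solution, which requires $q_{\bm{\theta}}$ to have full support; softmax outputs guarantee this. If the cost were not symmetric, the result would instead read with $\tfrac12(\bm{\phi}^*+\bm{\psi}^*)$ in place of $\bm{\phi}^{*(q_{\bm{\theta}},q_{\bm{\theta}})}$.
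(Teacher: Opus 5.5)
Your proposal is correct and follows the paper's proof essentially step for step: the same three-term split of $D_{\text{S}}^{\epsilon}$, the envelope theorem applied to the cross- and self-transport duals, symmetry of $\mC$ to identify the self-transport potentials, and the same stop-gradient differentiation. The one small difference is that you derive $\bm{\psi}^*=\bm{\phi}^*+c\mathbf{1}$ from symmetry and uniqueness, then remove $c$ using $\sum_i\nabla_{\bm{\theta}}q_{\bm{\theta}}(i)=0$, whereas the paper simply chooses the representative with $\bm{\phi}^*=\bm{\psi}^*$; the two amount to the same thing.
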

The result follows by differentiating the three terms in \cref{eq:sinkhorn}. By \cref{thm:gradient}, the first term yields a student probability gradient weighted by $\bm{\phi}^{*(p,q_{\bm{\theta}})}$, and the third term is independent of $\bm{\theta}$. The second term involves the entropy-regularized Wasserstein distance between $q_{\bm{\theta}}$ and itself. Assuming a symmetric cost matrix, the optimal dual variables in this self-transport problem can be chosen equal, i.e., $\bm{\phi}^{*(q_{\bm{\theta}},q_{\bm{\theta}})} = \bm{\psi}^{*(q_{\bm{\theta}},q_{\bm{\theta}})}$, allowing its gradient to be expressed using only $\bm{\phi}^{*(q_{\bm{\theta}},q_{\bm{\theta}})}$. Combining these contributions gives a gradient weighted by the difference between cross-transport and student self-transport potentials. Applying stop-gradient to these weights gives the loss in \cref{eq:KD_wasd}, whose gradient coincides with that of the Sinkhorn objective. The full proof is in \cref{app_subsec:proof_sink}. 

We refer to the objective $\mathcal{L}_{\text{WASD}}^{\epsilon}(\bm{\theta})$ in \cref{eq:KD_wasd} as the \textbf{WAS}serstein-based \textbf{D}istillation (WASD) objective. By \cref{thm:sink_objective}, WASD can be interpreted as a weighted sum over $q_{\bm{\theta}}$, where weights are given by the difference between the optimal dual variables $\bm{\phi}^{*(p,q_{\bm{\theta}})}(\rvx,\rvy_{<l})$ and $\bm{\phi}^{*(q_{\bm{\theta}},q_{\bm{\theta}})}(\rvx,\rvy_{<l})$, treated as constants during backpropagation. These dual variables can be efficiently computed via the Sinkhorn-Knopp algorithm (see \cref{alg:sinkhorn_knopp}) and a fixed-point iteration (see \cref{alg:fixed_point}), respectively. To improve computational efficiency, we follow \citet{na2026semanticaware} to construct a sparse kernel by retaining each token's $k$ nearest non-self neighbors in the embedding space, symmetrizing the support, and adding self-loops. Although the matrix scaling iterations in \cref{alg:sinkhorn_knopp,alg:fixed_point} introduce additional computational overhead, WASD avoids differentiation through them, as established in \cref{thm:sink_objective}.

\begin{figure}[t]
    \begin{minipage}{0.53\textwidth}
        \begin{algorithm}[H]
            \caption{Sinkhorn-Knopp algorithm for $\bm{\phi}^{*(p,q_{\bm{\theta}})}$}
            \label{alg:sinkhorn_knopp}
            \begin{algorithmic}[1]
                \REQUIRE Teacher distribution $p(\cdot|\rvx,\rvy_{<l})$, student distribution $q_{\bm{\theta}}(\cdot|\rvx,\rvy_{<l})$, cost matrix $\mC$, regularization hyperparameter $\epsilon$
                \STATE $\mathbf{u} \leftarrow \mathbf{1}_{d}$, $\mathbf{v} \leftarrow \mathbf{1}_{d}$, $\mK \leftarrow \exp(-\mC / \epsilon)$
                \WHILE{not converged}
                    \STATE $\mathbf{u} \leftarrow q_{\bm{\theta}}(\cdot|\rvx,\rvy_{<l}) ~./~ (\mK \mathbf{v})$
                    \STATE $\mathbf{v} \leftarrow p(\cdot|\rvx,\rvy_{<l}) ~./~ (\mK^{\top} \mathbf{u})$
                \ENDWHILE
                \STATE $\bm{\phi}^{*(p,q_{\bm{\theta}})} \leftarrow\epsilon \log \mathbf{u}$
                \ENSURE Dual variable $\bm{\phi}^{*(p,q_{\bm{\theta}})}$
            \end{algorithmic}
        \end{algorithm}
    \end{minipage}
    \hfill
    \begin{minipage}{0.45\textwidth}
        \begin{algorithm}[H]
            \caption{Fixed-point iteration for $\bm{\phi}^{*(q_{\bm{\theta}},q_{\bm{\theta}})}$}
            \label{alg:fixed_point}
            \begin{algorithmic}[1]
                \REQUIRE Student distribution $q_{\bm{\theta}}(\cdot|\rvx,\rvy_{<l})$, cost matrix $\mC$, regularization hyperparameter $\epsilon$
                \STATE $\mathbf{u} \leftarrow \mathbf{1}_{d}$, $\mK \leftarrow \exp(-\mC / \epsilon)$
                \WHILE{not converged}
                    \STATE $\mathbf{u} \leftarrow q_{\bm{\theta}}(\cdot|\rvx,\rvy_{<l}) ~./~ (\mK \mathbf{u})$
                \ENDWHILE
                \STATE $\bm{\phi}^{*(q_{\bm{\theta}},q_{\bm{\theta}})} \leftarrow\epsilon \log \mathbf{u}$
                \ENSURE Dual variable $\bm{\phi}^{*(q_{\bm{\theta}},q_{\bm{\theta}})}$
            \end{algorithmic}
        \end{algorithm}
    \end{minipage}
\end{figure}

\paragraph{Gradient update of WASD}

We compare the gradient update of WASD with that of KL-based distillation. For a fixed context $(\rvx,\rvy_{<l})$, using $\nabla_{\bm{\theta}} q_{\bm{\theta}} = q_{\bm{\theta}} \nabla_{\bm{\theta}}\log q_{\bm{\theta}}$, the Sinkhorn gradient implemented by WASD in \cref{thm:sink_objective} can be written as
\begin{align}
    \nabla_{\bm{\theta}} D_{\text{S}}^{\epsilon} (&  p(y_l | \rvx, \rvy_{<l}), q_{\bm{\theta}} (y_l | \rvx,\rvy_{<l}) ) \label{eq:wasd_log_gradient} \\
    &= \sum_{i=1}^d q_{\bm{\theta}} (y_l = i | \rvx, \rvy_{<l}) \left (  \phi_i^{*(p,q_{\bm{\theta}})}(\rvx,\rvy_{<l}) - \phi_i^{*(q_{\bm{\theta}},q_{\bm{\theta}})}(\rvx,\rvy_{<l})  \right ) \nabla_{\bm{\theta}} \log q_{\bm{\theta}}(y_l=i|\rvx,\rvy_{<l}).  \nonumber
\end{align}
The gradient is an expectation under the student distribution of log-likelihood gradients weighted by the difference of dual potentials. For comparison, the gradient of the forward KL objective is
\begin{align}
    \nabla_{\bm{\theta}} D_{\text{KL}}(p(y_l|\rvx,\rvy_{<l}&), q_{\bm{\theta}} (y_l|\rvx,\rvy_{<l}))  
    = -\textstyle\sum_{i=1}^{d}  p(y_l=i|\rvx,\rvy_{<l})   \nabla_{\bm{\theta}} \log q_{\bm{\theta}}(y_l=i|\rvx,\rvy_{<l}),
\end{align}
which weights the negative log-likelihood gradient directly by the teacher probability at each vocabulary index. In contrast, WASD weights the log-likelihood gradient by the student probability multiplied by the difference $\phi_i^{*(p,q_{\bm{\theta}})} - \phi_i^{*(q_{\bm{\theta}},q_{\bm{\theta}})} $. These potentials depend on the token embedding cost matrix, allowing token geometry to influence the update. Consequently, while KL determines its weights from index-wise probability values, WASD additionally incorporates semantic relationships between tokens through the transport problem.

\begin{table}[tp]
    \centering
    \caption{ROUGE-L scores on five general instruction-following benchmarks for \texttt{GPT-2 XL (1.5B) $\rightarrow$ GPT-2 Base (0.1B)}, varying divergences.  All results are obtained using our implementation, where all methods share identical teacher and student initialization. Performance is averaged over five evaluation seeds. \textbf{Bold} and \underline{underline} indicate the best and second-best performance, respectively.}
    \vspace{-.5em}
    \resizebox{\textwidth}{!}{%
    \begin{tabular}{lcccccc}
        \toprule
        Method & Dolly Eval & Self Inst & Vicuna & Super NI & UnNI & Avg. ($\uparrow$) \\
        \midrule
        GPT-2 XL (Teacher) & \rateinline{26.93}{0.60} & \rateinline{14.77}{0.38} & \rateinline{16.43}{0.28} & \rateinline{27.19}{0.47} & \rateinline{31.43}{0.12} & 23.35 \\
        \midrule
        
        KL~\cite{hinton2015distilling} & \rateinline{23.33}{0.36} & \rateinline{10.43}{0.23} & \rateinline{15.34}{0.27} & \rateinline{17.42}{0.14} & \rateinline{19.34}{0.07} & 17.17 \\
        RKL~\cite{gu2024minillm} & \rateinline{24.62}{0.27} & \rateinline{11.31}{0.28} & \rateinline{15.97}{0.28} & \rateinline{21.42}{0.30} & \rateinline{23.82}{0.14} & 19.43 \\
        Sym-KL & \rateinline{23.32}{0.37} & \rateinline{10.81}{0.27} & \rateinline{15.41}{0.56} & \rateinline{20.12}{0.23} & \rateinline{21.23}{0.09} & 18.18 \\
        Jeffrey & \rateinline{22.84}{0.53} & \rateinline{10.84}{0.24} & \rateinline{15.30}{0.22} & \rateinline{20.57}{0.37} & \rateinline{22.66}{0.13} & 18.44 \\
        TV~\cite{wen2023f} & \rateinline{24.08}{0.41} & \rateinline{10.83}{0.25} & \rateinline{14.68}{0.43} & \rateinline{25.57}{0.22} & \rateinline{\underline{27.51}}{0.09} & 20.53 \\
        GJS~\cite{agarwal2024onpolicy} & \rateinline{24.64}{0.40} & \rateinline{11.64}{0.37} & \rateinline{14.96}{0.42} & \rateinline{24.31}{0.09} & \rateinline{26.09}{0.12} & 20.33 \\
        SKL~\cite{ko2024distillm} & \rateinline{24.51}{0.37} & \rateinline{12.00}{0.18} & \rateinline{15.00}{0.23} & \rateinline{20.26}{0.19} & \rateinline{22.83}{0.04} & 18.92 \\
        SRKL~\cite{ko2024distillm} & \rateinline{24.68}{0.19} & \rateinline{12.60}{0.24} & \rateinline{{15.15}}{0.54} & \rateinline{22.39}{0.19} & \rateinline{24.26}{0.11} & 19.81 \\
        $\alpha$-$\beta$~\cite{wang2025abkd} & \rateinline{24.05}{0.18} & \rateinline{10.88}{0.31} & \rateinline{{15.87}}{0.23} & \rateinline{19.31}{0.21} & \rateinline{21.50}{0.11} & 18.32 \\
        CSD~\cite{kim2026distillation} & \rateinline{\underline{25.06}}{0.17} & \rateinline{\underline{12.75}}{0.26} & \rateinline{\textbf{16.42}}{0.39} & \rateinline{\underline{25.86}}{0.31} & \rateinline{27.19}{0.13} & \underline{21.46} \\
        \textbf{WASD (ours)} & \rateinline{\textbf{25.60}}{0.34} & \rateinline{\textbf{12.88}}{0.36} & \rateinline{\underline{15.98}}{0.40} & \rateinline{\textbf{28.63}}{0.17} & \rateinline{\textbf{29.27}}{0.09} & \textbf{22.47} \\
        \bottomrule
    \end{tabular}%
    }
    \label{tab:exp_divergence}
    \vspace{-.5em}
\end{table}

\section{Experiments}
\label{sec:exp}

We evaluate WASD on general instruction-following and task-specific distillation tasks, including summarization, translation, mathematical reasoning, and code generation. 
Experimental setups follow prior work~\cite{kim2026distillation,shin2026amid}, and details are provided in each subsection and \cref{app_sec:exp_setting}.

\begin{table}[tp]
    \centering
    \caption{ROUGE-L scores on five general instruction-following benchmarks for \texttt{GPT-2} and \texttt{OpenLLaMA2}. \textbf{Bold} and \underline{underline} indicate the best and second-best performance, respectively.}
    \vspace{-.5em}
    \resizebox{\textwidth}{!}{%
    \begin{tabular}{lcccccc}
        \toprule
        Method & Dolly Eval & Self Inst & Vicuna & Super NI & UnNI & Avg. ($\uparrow$) \\
        \midrule
        GPT-2 XL (Teacher) & \rateinline{26.93}{0.60} & \rateinline{14.77}{0.38} & \rateinline{16.43}{0.28} & \rateinline{27.19}{0.47} & \rateinline{31.43}{0.12} & 23.35 \\
        
        \midrule
        \multicolumn{7}{l}{\texttt{GPT-2 XL (1.5B) $\rightarrow$ GPT-2 Base (0.1B)}} \\
        \midrule
        GKD~\cite{agarwal2024onpolicy} & \rateinline{23.86}{0.22} & \rateinline{11.21}{0.42} & \rateinline{14.63}{0.19} & \rateinline{19.43}{0.17} & \rateinline{22.28}{0.12} & 18.28 \\
        TAID~\cite{shing2025taid} & \rateinline{25.44}{0.42} & \rateinline{12.55}{0.11} & \rateinline{16.65}{0.32} & \rateinline{24.45}{0.20} & \rateinline{26.70}{0.09} & 21.16 \\
        DistiLLM (SKL)~\cite{ko2024distillm} & \rateinline{25.39}{0.43} & \rateinline{12.42}{0.25} & \rateinline{16.21}{0.33} & \rateinline{25.13}{0.22} & \rateinline{26.94}{0.18} & 21.22 \\
        DistiLLM (SRKL)~\cite{ko2024distillm} & \rateinline{25.10}{0.30} & \rateinline{12.01}{0.26} & \rateinline{\textbf{17.40}}{0.28} & \rateinline{25.39}{0.21} & \rateinline{26.40}{0.10} & 21.26 \\
        ABKD~\cite{wang2025abkd} & \rateinline{26.15}{0.25} & \rateinline{12.56}{0.48} & \rateinline{17.24}{0.60} & \rateinline{27.07}{0.16} & \rateinline{29.03}{0.13} & 22.41 \\
        CSD~\cite{kim2026distillation} & \rateinline{25.91}{0.52} & \rateinline{12.97}{0.19} & \rateinline{16.80}{0.57} & \rateinline{26.90}{0.16} & \rateinline{28.51}{0.06} & 22.22 \\
        AMiD~\cite{shin2026amid} & \rateinline{\underline{26.39}}{0.40} & \rateinline{\underline{14.18}}{0.17} & \rateinline{16.81}{0.25} & \rateinline{\underline{29.11}}{0.16} & \rateinline{\underline{30.78}}{0.13} & \underline{23.46} \\
        \textbf{WASD (ours)} & \rateinline{\textbf{27.07}}{0.22} & \rateinline{\textbf{14.34}}{0.49} & \rateinline{\underline{17.25}}{0.37} & \rateinline{\textbf{29.60}}{0.22} & \rateinline{\textbf{31.86}}{0.08} & \textbf{24.02} \\
        
        \midrule
        \multicolumn{7}{l}{\texttt{GPT-2 XL (1.5B) $\rightarrow$ GPT-2 Medium (0.3B)}} \\
        \midrule
        GKD~\cite{agarwal2024onpolicy} & \rateinline{25.31}{0.43} & \rateinline{12.16}{0.57} & \rateinline{15.26}{0.34} & \rateinline{26.00}{0.20} & \rateinline{27.58}{0.09} & 21.26 \\
        TAID~\cite{shing2025taid} & \rateinline{26.97}{0.18} & \rateinline{14.63}{0.42} & \rateinline{\underline{17.58}}{0.26} & \rateinline{26.75}{0.30} & \rateinline{31.01}{0.08} & 23.39 \\
        DistiLLM (SKL)~\cite{ko2024distillm} & \rateinline{25.94}{0.31} & \rateinline{14.27}{0.43} & \rateinline{16.60}{0.47} & \rateinline{25.42}{0.24} & \rateinline{30.12}{0.11} & 22.47 \\
        DistiLLM (SRKL)~\cite{ko2024distillm} & \rateinline{26.97}{0.32} & \rateinline{13.93}{0.37} & \rateinline{17.02}{0.21} & \rateinline{26.72}{0.20} & \rateinline{29.43}{0.08} & 22.81 \\
        ABKD~\cite{wang2025abkd} & \rateinline{27.18}{0.30} & \rateinline{14.12}{0.27} & \rateinline{17.28}{0.26} & \rateinline{26.87}{0.32} & \rateinline{31.14}{0.04} & 23.32 \\
        CSD~\cite{kim2026distillation} & \rateinline{27.15}{0.46} & \rateinline{\underline{15.19}}{0.36} & \rateinline{17.56}{0.39} & \rateinline{27.92}{0.23} & \rateinline{33.12}{0.09} & 24.19 \\
        AMiD~\cite{shin2026amid} & \rateinline{\underline{27.82}}{0.25} & \rateinline{{14.84}}{0.33} & \rateinline{16.55}{0.38} & \rateinline{\textbf{30.30}}{0.17} & \rateinline{\underline{34.17}}{0.06} & \underline{24.74} \\
        \textbf{WASD (ours)} & \rateinline{\textbf{28.37}}{0.14} & \rateinline{\textbf{15.45}}{0.41} & \rateinline{\textbf{18.41}}{0.28} & \rateinline{\underline{30.24}}{0.19} & \rateinline{\textbf{35.06}}{0.03} & \textbf{25.51} \\
        
        \midrule
        \midrule
        OpenLLaMA2-7B (Teacher) &  \rateinline{27.44}{0.38} & \rateinline{18.47}{0.83} & \rateinline{17.95}{0.30} & \rateinline{31.06}{0.11} & \rateinline{32.44}{0.11} & 25.47 \\
        \midrule
        \multicolumn{7}{l}{\texttt{OpenLLaMA2-7B $\rightarrow$ OpenLLaMA2-3B}} \\
        \midrule
        GKD~\cite{agarwal2024onpolicy} & \rateinline{29.36}{0.20} & \rateinline{\underline{20.73}}{0.23} & \rateinline{\underline{20.81}}{0.18} & \rateinline{\underline{38.22}}{0.22} & \rateinline{37.25}{0.14} & 29.27 \\
        TAID~\cite{shing2025taid} & \rateinline{26.27}{0.48} & \rateinline{18.10}{0.66} & \rateinline{18.05}{0.35} & \rateinline{31.58}{0.15} & \rateinline{30.10}{0.11} & 24.82 \\
        DistiLLM (SKL)~\cite{ko2024distillm} & \rateinline{28.48}{0.44} & \rateinline{\textbf{20.76}}{0.39} & \rateinline{19.48}{0.51} & \rateinline{35.44}{0.24} & \rateinline{34.54}{0.16} & 27.74 \\
        DistiLLM (SRKL)~\cite{ko2024distillm} & \rateinline{28.55}{0.39} & \rateinline{19.71}{0.52} & \rateinline{19.94}{0.34} & \rateinline{35.38}{0.25} & \rateinline{36.02}{0.18} & 27.92 \\
        ABKD~\cite{wang2025abkd} & \rateinline{29.04}{0.26} & \rateinline{20.26}{0.32} & \rateinline{20.77}{0.33} & \rateinline{37.07}{0.41} & \rateinline{36.55}{0.15} & 28.74 \\
        CSD~\cite{kim2026distillation} & \rateinline{29.08}{0.24} & \rateinline{20.27}{0.37} & \rateinline{20.26}{0.21} & \rateinline{36.96}{0.10} & \rateinline{35.92}{0.11} & 28.50 \\
        AMiD~\cite{shin2026amid} & \rateinline{\textbf{30.21}}{0.34} & \rateinline{20.65}{0.31} & \rateinline{{20.51}}{0.26} & \rateinline{{37.49}}{0.13} & \rateinline{\underline{37.61}}{0.09} & \underline{29.30} \\
        \textbf{WASD (ours)} & \rateinline{\underline{29.83}}{0.25} & \rateinline{{20.33}}{0.54} & \rateinline{\textbf{21.34}}{0.29} & \rateinline{\textbf{38.83}}{0.17} & \rateinline{\textbf{37.74}}{0.09} & \textbf{29.61} \\
        \bottomrule
    \end{tabular}%
    }
    \label{tab:main}
    \vspace{-.5em}
\end{table}

\subsection{General instruction-following experiments}
\label{subsec:exp_general_inst}

\paragraph{Setup}
For general instruction-following experiments, we follow the experimental protocols of previous work~\cite{kim2026distillation,ko2024distillm,shin2026amid}. For GPT-2 and OpenLLaMA2, we use databricks-dolly-15k~\cite{DatabricksBlog2023DollyV2} as the distillation dataset and OpenWebText~\cite{Gokaslan2019OpenWeb} as the pretraining dataset. In the divergence comparison experiment in \cref{tab:exp_divergence}, we exclude the pretraining dataset for direct comparison.
We evaluate WASD on three model families: GPT-2~\cite{radford2019language}, OpenLLaMA2~\cite{touvron2023llama2}, and Qwen2.5~\cite{hui2024qwen2}. For GPT-2, we employ GPT-2 XL (1.5B) as the teacher and GPT-2 Base (0.1B) and Medium (0.3B) as students. For OpenLLaMA2, we use OpenLLaMA2-7B as the teacher and OpenLLaMA2-3B as the student, trained with LoRA~\cite{hu2022lora}.
For Qwen2.5, we use Qwen2.5-7B-Instruct as the teacher and Qwen2.5-1.5B-Instruct as the student, following the distillation setups of previous work~\cite{kim2026distillation,ko2025distillm2}.

We compare WASD against baselines using different divergence objectives. Baseline details are provided in \cref{app_subsec:baselines}.
For GPT-2 and OpenLLaMA2, we evaluate on five instruction-following datasets: (1) the Dolly evaluation dataset, (2) Self-Instruct~\cite{wang-etal-2023-self-instruct}, (3) Vicuna~\cite{vicuna2023}, (4) Super-Natural Instructions (Super NI)~\cite{wang-etal-2022-super}, and (5) Unnatural Instructions (UnNI)~\cite{honovich-etal-2023-unnatural}. We report ROUGE-L~\cite{lin-2004-rouge} to measure generation quality through similarity to reference text and Self-BLEU~\cite{zhu2018texygen} to measure generation diversity.
For Qwen2.5, we evaluate on AlpacaEval~\cite{alpaca_eval}, Evol-Instruct~\cite{xu2024wizardlm}, and UltraFeedback~\cite{cui2024ultrafeedback}, reporting win rates against reference responses using GPT-4o-mini as the judge.


\paragraph{Comparison to other divergences}
We first conduct a divergence-level comparison to examine whether the proposed Wasserstein-based objective itself provides a better distillation signal. In this comparison, we do not use an additional pretraining loss, SFT initialization for the student, or on-policy techniques. \cref{tab:exp_divergence} shows that WASD achieves the best average performance among the compared divergences, demonstrating the effectiveness of the proposed divergence. We further analyze the quality-diversity trade-off by varying the decoding temperature. In \cref{subfig:motivation_trade_off}, WASD consistently forms a favorable frontier over existing baselines, indicating that WASD preserves generation quality even with increased diversity. This behavior is likely due to the token-embedding transport cost used in WASD: by assigning lower costs to tokens with similar meanings, WASD can preserve generation quality even when decoding is encouraged to produce more diverse outputs.

\paragraph{Compatibility with recent distillation frameworks}
We evaluate whether this benefit remains when combined with recent distillation methods. Following the setup of AMiD~\cite{shin2026amid}, we additionally incorporate the assistant distribution proposed in AMiD and compare WASD against recent baselines under the same training framework. As shown in \cref{tab:main}, WASD consistently achieves the best average performance across different GPT-2 student scales and the larger OpenLLaMA2-7B teacher setting. These results suggest that the token-embedding cost structure in WASD contributes additional improvements beyond those achieved by recent assistant-distribution-based distillation methods.

\begin{wrapfigure}[14]{R}{0.50\textwidth}
    \vspace{-.5em}
    \centering
    \begin{subfigure}[b]{0.49\linewidth}
        \centering
        \includegraphics[width=\linewidth]{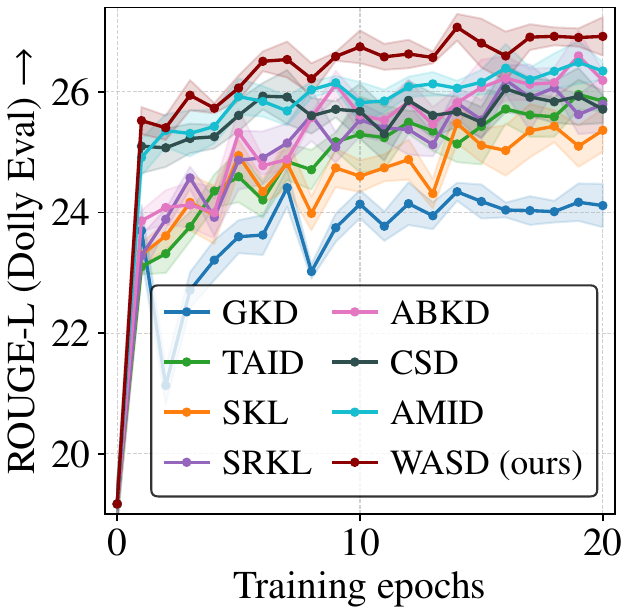}
        \caption{}
        \label{subfig:training_curve_epoch}
    \end{subfigure}
    \hfill 
    \begin{subfigure}[b]{0.49\linewidth}
        \centering
        \includegraphics[width=\linewidth]{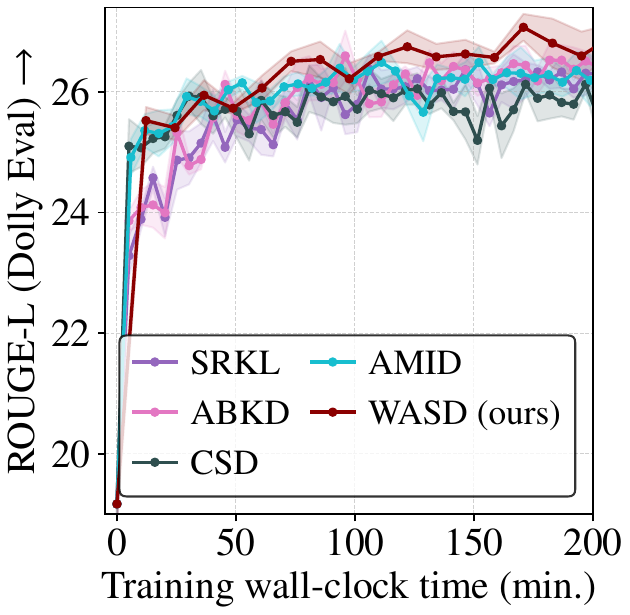}
        \caption{}
        \label{subfig:training_curve_time}
    \end{subfigure}
    \caption{ROUGE-L during training on Dolly Eval.}
    \label{fig:curve}
\end{wrapfigure}

\cref{fig:curve} illustrates the training dynamics of WASD. \cref{subfig:training_curve_epoch} shows test performance across training epochs, where WASD consistently maintains higher performance than competing methods throughout most of training. Since WASD additionally requires matrix scaling to compute the dual variables $\bm{\phi}$, it requires approximately twice the training time and 20\% more training GPU memory than the compared baselines in our experiments, with no additional inference-time overhead.
To compare performance under the same time budget, we extend the training epochs of the competing baselines and report performance against wall-clock time in \cref{subfig:training_curve_time}. Even under this setting, WASD continues to outperform competing methods at the same training time. These results show that faster convergence can offset the additional per-epoch cost when targeting a given performance level.

\paragraph{LLM-based evaluation}
We additionally use LLM-based judgments to assess response quality beyond ROUGE-L, which primarily measures lexical overlap with reference answers. For GPT-2, we follow the GPT-4 feedback protocol of prior distillation studies~\cite{gu2024minillm,kim2026distillation,ko2024distillm} under the same experimental setting as \cref{tab:main}. GPT-4 assigns a score from 1 to 10 to each model response and reference answer, and we report the ratio of total model-response score to the total reference-answer score as a percentage. As shown in \cref{tab:gpt4_feedback}, WASD outperforms ABKD, CSD, and AMiD on all three benchmarks. These results suggest that incorporating token-level semantic information may improve response quality as assessed by GPT-4 feedback, which can account for semantic similarity between generated responses and reference answers.

For Qwen2.5, we combine the DistiLLM-2 training framework with the AMiD assistant distribution and the forward KL objective, denoting this baseline as AMiD. WASD replaces the forward KL term with the proposed Sinkhorn divergence objective while keeping the remaining training configuration unchanged. \cref{tab:qwen25} shows that WASD achieves higher win rates than AMiD across all three benchmarks under GPT-4o-mini evaluation. These results further support the effectiveness of the proposed objective for instruction-following distillation between Qwen2.5 models.

\begin{figure}[t]
    \centering
    \begin{minipage}[t]{0.46\textwidth}
        \centering
        \vspace*{0pt}
        \captionof{table}{GPT-4 feedback scores for \texttt{GPT-2 XL (1.5B)} $\rightarrow$ \texttt{GPT-2 Base (0.1B)}, reported as percentages of reference-answer scores.}
        \resizebox{\textwidth}{!}{%
        \begin{tabular}{lccc}
            \toprule
            Method & Dolly Eval & Self Inst & Vicuna \\
            \midrule
            ABKD~\cite{wang2025abkd} & 36.09 & 20.30 & 24.67 \\
            CSD~\cite{kim2026distillation} & 35.32 & 19.91 & 22.40 \\
            AMiD~\cite{shin2026amid} & 36.35 & 18.87 & 22.00 \\
            \textbf{WASD (ours)} & \textbf{37.84} & \textbf{20.90}  & \textbf{25.07}  \\
            \bottomrule
        \end{tabular}%
        }
        \label{tab:gpt4_feedback}
    \end{minipage}
    \hfill
    \begin{minipage}[t]{0.52\textwidth}
        \centering
        \vspace*{0pt}
        \captionof{table}{Instruction-following win rates (\%) against reference responses for \texttt{Qwen2.5-7B-Instruct} $\rightarrow$ \texttt{Qwen2.5-1.5B-Instruct}, using GPT-4o-mini.}
        \resizebox{\textwidth}{!}{%
        \begin{tabular}{lccc}
            \toprule
            Method & AlpacaEval & Evol-Instruct & UltraFeedback \\
            \midrule
            Teacher & 95.31 & 94.50 & 84.16 \\
            Student & 71.80 & 58.03 & 45.06 \\
            \midrule
            AMiD~\cite{shin2026amid} & 89.29 & 83.94 & 72.81 \\
            \textbf{WASD (ours)} & \textbf{90.04} & \textbf{85.32}  & \textbf{73.21}  \\
            \bottomrule
        \end{tabular}%
        }
        \label{tab:qwen25}
    \end{minipage}
    \vspace{-.1em}
\end{figure}

\begin{figure}[tp]   
    \centering
    \begin{minipage}[t]{0.50\textwidth}
        \vspace*{0pt}
        \centering
        \captionof{table}{Effect of the transport cost on distillation performance for \texttt{GPT-2 XL (1.5B)} $\rightarrow$ \texttt{GPT-2 Base (0.1B)}. Avg. ROUGE-L denotes the average over five instruction-following benchmarks. Results are averaged over three training seeds.}
        \resizebox{0.95\textwidth}{!}{%
        \begin{tabular}{lc}
            \toprule
            Transport cost & Avg. ROUGE-L ($\uparrow$) \\
            \midrule
            AMiD~\cite{shin2026amid} & \rateinline{23.36}{0.07} \\
            \midrule
            Uniform cost & \rateinline{22.08}{0.07} \\
            Permuted cost & \rateinline{23.47}{0.26} \\
            \textbf{Semantic cost (WASD)} & \rateinline{\textbf{23.94}}{0.15} \\
            \bottomrule
        \end{tabular}%
        }
        \label{tab:transport_cost}
    \end{minipage}
    \hfill
    \begin{minipage}[t]{0.48\textwidth}
        \vspace*{0pt}

        \refstepcounter{figure}
        \setcounter{subfigure}{0}
        
        \begin{subfigure}[b]{0.48\linewidth}
            \centering
            \includegraphics[width=\linewidth]{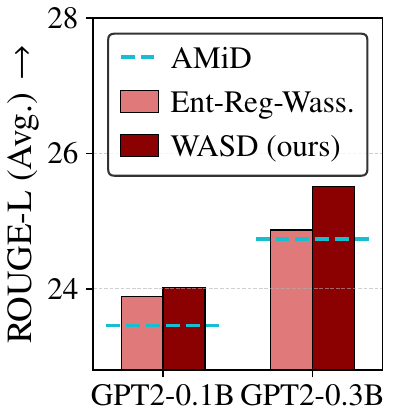}
            \caption{Performance of the entropy-regularized Wasserstein distance.}
            \label{subfig:abl_entwass}
        \end{subfigure}
        \hfill 
        \begin{subfigure}[b]{0.48\linewidth}
            \centering
            \includegraphics[width=\linewidth]{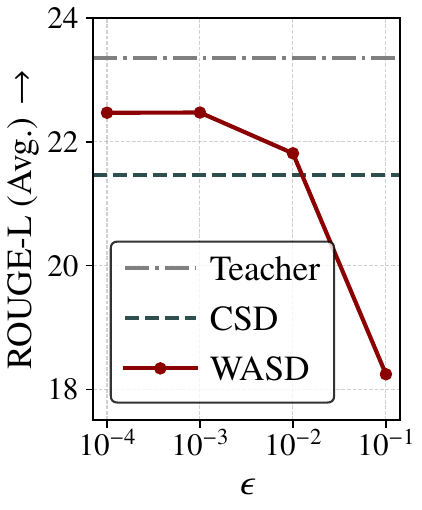}
            \caption{Sensitivity analysis of hyperparameter $\epsilon$.}
            \label{subfig:abl_eps}
        \end{subfigure}
        \addtocounter{figure}{-1}
        \caption{Ablation studies of WASD.}
        \label{fig:abl}
    \end{minipage}
    \vspace{-1em}
\end{figure}

\paragraph{Ablation studies}
We first examine whether the gains of WASD depend on meaningful token geometry through a controlled transport-cost ablation. We follow the GPT-2 XL (1.5B) to Base (0.1B) setup in \cref{tab:main}, but average results over three training seeds, which accounts for the differences from the reported values. We compare three cost matrices while keeping all other training settings fixed: a uniform cost that assigns a constant transport cost, a permuted cost that randomly reassigns token relations, and a semantic cost based on cosine distances between teacher token embeddings. The permuted cost retains Sinkhorn transport while removing meaningful token correspondence, whereas the uniform cost removes graded token geometry. As shown in \cref{tab:transport_cost}, the semantic cost achieves the highest average ROUGE-L. Permuting token relations reduces performance to a level comparable to AMiD, while the uniform cost performs below the baseline. These results suggest that meaningful token geometry contributes to the gains of WASD, rather than generic smoothing alone.

We next analyze the effect of correcting entropic bias. As discussed in \cref{subsec:method_objective}, entropy-regularized Wasserstein distance introduces an entropic bias, which Sinkhorn divergence mitigates by subtracting the self-similarity terms. To examine this effect, we evaluate entropy-regularized Wasserstein distance under the GPT-2 distillation setup in \cref{tab:main}, and we report average ROUGE-L scores in \cref{subfig:abl_entwass}. Although entropy-regularized Wasserstein distance still outperforms the strongest baseline, AMiD, likely due to its token-embedding transport cost, it underperforms WASD. This result suggests that correcting entropic bias helps better utilize semantically informed transport costs in distillation.

We also conduct a sensitivity analysis on the entropy regularization hyperparameter $\epsilon$ under the setting of \cref{tab:exp_divergence}, with results in \cref{subfig:abl_eps}. When $\epsilon$ is sufficiently small, WASD consistently outperforms the strongest baseline, CSD. However, performance gradually decreases as $\epsilon$ increases. This behavior is consistent with the transport kernel $\mK = \exp (-\mC / \epsilon)$ becoming less sensitive to differences in the transport cost matrix $\mC$, weakening the influence of the token-embedding cost structure. The toy visualization in \cref{app_fig:loss_sink_eps} of \cref{app_sec:exp_results} shows that the loss landscape becomes flatter as $\epsilon$ increases, suggesting reduced sensitivity to deviations from the teacher distribution in this example.


\subsection{Task-specific experiments}
\label{subsec:exp_specific}

\paragraph{Setup}
Following previous work~\cite{ko2025distillm2,shin2026amid,xu2025speculative}, we utilize diverse task-specific distillation datasets: Flores-200~\cite{costa2022no} for translation, DialogSum~\cite{chen-etal-2021-dialogsum} for summarization, GSM8k~\cite{cobbe2021training} for arithmetic reasoning, and WizardCoder~\cite{luo2023wizardcoder} for code generation.
We employ the supervised fine-tuned Gemma-7B-IT~\cite{team2024gemma} as the teacher and Gemma-2B-IT as the student. For the code generation task, we use Qwen2.5-Coder-7B-Instruct~\cite{hui2024qwen2} as the teacher and Qwen2.5-Coder-1.5B-Instruct as the student.
We evaluate on the test dataset for each task, using HumanEval~\cite{chen2021evaluating} and MBPP~\cite{austin2021program} for code generation. We report COMET~\cite{rei2022comet} for translation, ROUGE-L for summarization, accuracy for arithmetic reasoning, and pass@1 for code generation.



\paragraph{Results}
As shown in \cref{tab:task_specific,tab:task_specific_code}, WASD consistently improves performance on translation, summarization, arithmetic reasoning, and code generation benchmarks across different model series and task domains. These results demonstrate that incorporating token-level semantic information provides robust benefits even in large-scale and specialized distillation settings. In the current implementation, we construct the cost matrix using the token embedding space of the teacher model, which is the same as in the task-agnostic distillation setting. However, the flexibility of the Wasserstein formulation allows the cost matrix to capture richer structures beyond general token semantics. In particular, incorporating task-specific or domain knowledge into the cost matrix may further improve distillation performance, which we leave for future work.

\begin{figure}[t]
    \centering
    \begin{minipage}[t]{0.52\textwidth}
        \centering
        \vspace*{0pt}
        \captionof{table}{Task-specific distillation results on translation, summarization, and arithmetic reasoning for fine-tuned \texttt{Gemma-7B-IT} $\rightarrow$ \texttt{Gemma-2B-IT}.}
        \resizebox{\textwidth}{!}{%
        \begin{tabular}{lccc}
            \toprule
             & Translation & Summarization & Arithmetic \\
            Method & COMET ($\uparrow$) & ROUGE-L ($\uparrow$) & Accuracy ($\uparrow$) \\
            \midrule
            Teacher & 79.33 & 36.05 & 55.04 \\
            Student & 44.39 & 11.70 & 0.00 \\
            \midrule
            CSD~\cite{kim2026distillation} & \underline{73.71} & 34.88 & 23.12 \\
            AMiD~\cite{shin2026amid} & 73.70 & \underline{35.09} & \underline{24.49} \\
            \textbf{WASD (ours)} & \textbf{74.53} & \textbf{35.15}  & \textbf{24.87}  \\
            \bottomrule
        \end{tabular}%
        }
        \label{tab:task_specific}
    \end{minipage}
    \hfill
    \begin{minipage}[t]{0.46\textwidth}
        \vspace*{0pt}
        \centering
        \captionof{table}{Distillation results on code generation for \texttt{Qwen2.5-Coder-Instruct} 7B $\rightarrow$ 1.5B, evaluated with pass@1.}
        \resizebox{\textwidth}{!}{%
        \begin{tabular}{lccc}
            \toprule
            Method & HumanEval& MBPP& Avg. ($\uparrow$) \\
            \midrule
            Teacher & 91.5 & 82.3 & 86.9 \\
            Student & 69.5 & 68.8 & 69.2 \\
            \midrule
            DistiLLM-2~\cite{ko2025distillm2} & 72.6 & 72.5 & 72.6 \\ 
            CSD~\cite{kim2026distillation} & 73.2 & 73.0 & 73.1 \\ 
            AMiD~\cite{shin2026amid} & \underline{73.8} & \underline{74.3} & \underline{74.1} \\
            \textbf{WASD (ours)} & \textbf{74.4} & \textbf{75.4} & \textbf{74.9} \\
            \bottomrule
        \end{tabular}%
        }
        \label{tab:task_specific_code}
    \end{minipage}
\end{figure}

\section{Conclusion}
\label{sec:conc}

We propose WASD, a Wasserstein-based knowledge distillation method for large language models (LLMs) that incorporates token-level semantic information through a transport-based formulation. By adopting the Sinkhorn divergence and deriving an efficient objective based on optimal transport dual variables, WASD enables practical semantic-aware distillation without additional networks or backpropagation through the transport solver. Experiments across multiple model families and tasks show that WASD consistently improves distillation performance over existing distillation methods, particularly in quality-diversity trade-offs. These results highlight the importance of token-level information encoded in the vocabulary space for effective knowledge transfer in LLM distillation and suggest promising future directions for semantic-aware distribution alignment.

\newpage

\begin{ack}

This work was supported by the InnoCORE program of the Ministry of Science and ICT (N10260008) (10\%). This work was supported by the IITP (Institute of Information \& Communications Technology Planning \& Evaluation)-ITRC (Information Technology Research Center) grant funded by the Korea government (Ministry of Science and ICT) (IITP-2026-RS-2024-00437268) (45\%). This research was supported by AI Technology Development for Commonsense Extraction, Reasoning, and Inference from Heterogeneous Data (IITP) funded by the Ministry of Science and ICT (RS-2022-II220077) (45\%).

\end{ack}

\bibliography{WASD}
\bibliographystyle{plainnat}

\newpage


\appendix

\crefalias{section}{appendix}
\crefalias{subsection}{appendix}
\crefname{appendix}{Appendix}{Appendices}
\Crefname{appendix}{Appendix}{Appendices}

\section{Proofs and derivations}
\label{app_sec:proof}

\subsection{\texorpdfstring{Proof of \cref{thm:entW_objective}}{Proof of Lemma 1}}
\label{app_subsec:proof_entW}

\lema*
\begin{proof}

By strong duality, the entropy-regularized Wasserstein distance ${\tilde{D}}_{{\text{W}}}^{\epsilon} (p, q_{\bm{\theta}})$ is equal to the optimal value of the following dual problem:
\begin{align}
    \max_{\bm{\phi},\bm{\psi}} \sum_{i=1}^{d} \phi_i q_{\bm{\theta}}(y_l=i|\rvx,\rvy_{<l}) + \sum_{j=1}^{d} \psi_j p(y_l=j|\rvx,\rvy_{<l}) - \sum_{i=1}^{d} \sum_{j=1}^{d} \epsilon \exp \big (\frac{1}{\epsilon} (\phi_i + \psi_j - C_{ij} ) \big ).\label{app_eq:eot_dual}
\end{align}
Substituting the optimal dual variables $\bm{\phi}^{*(p,q_{\bm{\theta}})}(\rvx,\rvy_{<l})$ and $\bm{\psi}^{*(p,q_{\bm{\theta}})}(\rvx,\rvy_{<l})$ into the dual objective yields
\begin{align}
 & {\tilde{D}}_{{\text{W}}}^{\epsilon} (p(y_l|\rvx,\rvy_{<l}), q_{\bm{\theta}} (y_l|\rvx,\rvy_{<l})) \nonumber \\
 &=   \sum_{i=1}^{d} \phi_i^{*(p,q_{\bm{\theta}})}(\rvx,\rvy_{<l}) q_{\bm{\theta}}(y_l=i|\rvx,\rvy_{<l}) + \sum_{j=1}^{d} \psi_j^{*(p,q_{\bm{\theta}})} (\rvx,\rvy_{<l})p(y_l=j|\rvx,\rvy_{<l})  \nonumber \\
 & \quad - \sum_{i=1}^{d} \sum_{j=1}^{d} \epsilon \exp \big (\frac{1}{\epsilon} (\phi_i^{*(p,q_{\bm{\theta}})}(\rvx,\rvy_{<l}) + \psi_j^{*(p,q_{\bm{\theta}})}(\rvx,\rvy_{<l}) - C_{ij} ) \big ). \label{app_eq:entw_dual_value}
\end{align}
By the optimality conditions in \cref{eq:optimal_sol}, the optimal transport plan satisfies
\begin{align}
    {P_{ij}^{(l)}}^{*} = \exp \left ( \frac{1}{\epsilon} (\phi_i^{*(p,q_{\bm{\theta}})}(\rvx,\rvy_{<l}) + \psi_j^{*(p,q_{\bm{\theta}})}(\rvx,\rvy_{<l}) - \emC_{ij} )\right ).
\end{align}
Since ${\mP^{(l)}}^{*} \in U_l (p,q_{\bm{\theta}})$, its total mass is $1$. Therefore, the last term in \cref{app_eq:entw_dual_value} equals $-\epsilon$, giving
\begin{align}
 & {\tilde{D}}_{{\text{W}}}^{\epsilon} (p(y_l|\rvx,\rvy_{<l}), q_{\bm{\theta}} (y_l|\rvx,\rvy_{<l})) \nonumber \\
 = &  \sum_{i=1}^{d} \phi_i^{*(p,q_{\bm{\theta}})}(\rvx,\rvy_{<l}) q_{\bm{\theta}}(y_l=i|\rvx,\rvy_{<l}) + \sum_{j=1}^{d} \psi_j^{*(p,q_{\bm{\theta}})} (\rvx,\rvy_{<l})p(y_l=j|\rvx,\rvy_{<l})  - \epsilon.
\end{align}

\end{proof}

\subsection{\texorpdfstring{Proof of \cref{thm:gradient}}{Proof of Proposition 2}}
\label{app_subsec:proof_gradient}

\gradient*
\begin{proof}
For a fixed context $(\rvx,\rvy_{<l})$, assume that $\epsilon >0$, the cost matrix $\mC$ is finite and fixed, the teacher distribution is fixed, and both distributions are strictly positive, with $q_{\bm{\theta}}$ differentiable in $\bm{\theta}$.
By strong duality, the entropy-regularized Wasserstein distance is equal to the optimal value of the dual problem in \cref{app_eq:eot_dual}. Specifically, define the dual objective as follows: 
\begin{align}
    \mathcal{L}(\bm{\theta},\bm{\phi},\bm{\psi}) := & \sum_{i=1}^{d} \phi_i q_{\bm{\theta}}(y_l=i|\rvx,\rvy_{<l}) + \sum_{j=1}^{d} \psi_j p(y_l=j | \rvx,\rvy_{<l}) \nonumber \\
    & - \epsilon \sum_{i=1}^{d} \sum_{j=1}^{d} \exp \left ( \frac{1}{\epsilon} ( \phi_i + \psi_j - C_{ij} ) \right ).
\end{align}
Then, we have
\begin{align}
    \tilde{D}_{\text{W}}^{\epsilon} (p(y_l|\rvx,\rvy_{<l}),q_{\bm{\theta}}(y_l|\rvx,\rvy_{<l})) = \max_{\bm{\phi},\bm{\psi}} \mathcal{L} (\bm{\theta}, \bm{\phi}, \bm{\psi}).
\end{align}

Although the optimal dual variables depend implicitly on $\bm{\theta}$ through $q_{\bm{\theta}}$, the envelope theorem~\cite{milgrom2002envelope} allows us to differentiate the optimal value by treating the optimal dual variables as fixed. In particular, only the student marginal term in $\mathcal{L}$ depends explicitly on $\bm{\theta}$. Therefore,
\begin{align}
    \nabla_{\bm{\theta}} \tilde{D}_{\text{W}}^{\epsilon} ( p(y_l | \rvx, \rvy_{<l}) , q_{\bm{\theta}} (y_l | \rvx,\rvy_{<l}) ) & =\left.
    \nabla_{\bm{\theta}}\mathcal{L}(\bm{\theta},\bm{\phi},\bm{\psi})
    \right|_{
        \bm{\phi}=\bm{\phi}^{*(p,q_{\bm{\theta}})},
        \bm{\psi}=\bm{\psi}^{*(p,q_{\bm{\theta}})}
    } \\
    & = \sum_{i=1}^{d} {\phi}_i^{*(p,q_{\bm{\theta}})} (\rvx,\rvy_{<l}) \nabla_{\bm{\theta}} q_{\bm{\theta}}(y_l=i|\rvx,\rvy_{<l}).
\end{align}
Note that the additive ambiguity of the dual potentials does not affect this gradient, since $\sum_{i=1}^{d}\nabla_{\bm{\theta}}
q_{\bm{\theta}}(y_l=i|\rvx,\rvy_{<l})=0$.
\end{proof}

\subsection{\texorpdfstring{Proof of \cref{thm:sink_objective}}{Proof of Theorem 3}}
\label{app_subsec:proof_sink}

\thma*
\begin{proof}

By definition, the Sinkhorn divergence is
\begin{align}
    {D}_{{\text{S}}}^{\epsilon}( p ,  q_{\bm{\theta}} ) := {\tilde{D}}_{{\text{W}}}^{\epsilon}( p,  q_{\bm{\theta}})
    - \frac{1}{2} {\tilde{D}}_{{\text{W}}}^{\epsilon}( q_{\bm{\theta}},  q_{\bm{\theta}})
    - \frac{1}{2} {\tilde{D}}_{{\text{W}}}^{\epsilon}( p, p).\label{app_eq:sinkhorn}
\end{align}
We first apply \cref{thm:gradient} to the first term ${\tilde{D}}_{{\text{W}}}^{\epsilon}( p,  q_{\bm{\theta}})$. This gives
\begin{align}
    \nabla_{\bm{\theta}}\tilde{D}_{\text{W}}^{\epsilon}(p(y_l|\rvx,\rvy_{<l}),q_{\bm{\theta}}(y_l|\rvx,\rvy_{<l})) = \sum_{i=1}^{d} \phi_i^{*(p,q_{\bm{\theta}})}(\rvx,\rvy_{<l}) \nabla_{\bm{\theta}}q_{\bm{\theta}}(y_l=i|\rvx,\rvy_{<l}).
    \label{app_eq:sink1}
\end{align}

Next, consider the second term ${\tilde{D}}_{{\text{W}}}^{\epsilon}( q_{\bm{\theta}},  q_{\bm{\theta}})$. Applying the same argument as in the proof of \cref{thm:gradient}, both marginal terms in the dual objective depend explicitly on $\bm{\theta}$. Therefore,
\begin{align}
    &\nabla_{\bm{\theta}}\tilde{D}_{\text{W}}^{\epsilon} (q_{\bm{\theta}}(y_l|\rvx,\rvy_{<l}), q_{\bm{\theta}}(y_l|\rvx,\rvy_{<l})) \nonumber \\
    &=\sum_{i=1}^{d} \phi_i^{*(q_{\bm{\theta}},q_{\bm{\theta}})}(\rvx,\rvy_{<l}) \nabla_{\bm{\theta}}q_{\bm{\theta}}(y_l=i|\rvx,\rvy_{<l})+\sum_{i=1}^{d}\psi_i^{*(q_{\bm{\theta}},q_{\bm{\theta}})}(\rvx,\rvy_{<l})\nabla_{\bm{\theta}}q_{\bm{\theta}}(y_l=i|\rvx,\rvy_{<l}).\label{app_eq:sink2}
\end{align}

In the self-transport case, the two marginals are identical. By symmetry of the cost matrix, the optimal dual variables $\bm{\phi}^{*(q_{\bm{\theta}},q_{\bm{\theta}})}$ and $\bm{\psi}^{*(q_{\bm{\theta}},q_{\bm{\theta}})}$ can be chosen equal. We take the representative satisfying $\bm{\phi}^{*(q_{\bm{\theta}},q_{\bm{\theta}})} = \bm{\psi}^{*(q_{\bm{\theta}},q_{\bm{\theta}})}$. Therefore, 
\begin{align}
    &\nabla_{\bm{\theta}}\tilde{D}_{\text{W}}^{\epsilon} (q_{\bm{\theta}}(y_l|\rvx,\rvy_{<l}), q_{\bm{\theta}}(y_l|\rvx,\rvy_{<l}))=2\sum_{i=1}^{d} \phi_i^{*(q_{\bm{\theta}},q_{\bm{\theta}})}(\rvx,\rvy_{<l})\nabla_{\bm{\theta}}q_{\bm{\theta}}(y_l=i|\rvx,\rvy_{<l}). \label{app_eq:sink3}
\end{align}

Finally, the last term ${\tilde{D}}_{{\text{W}}}^{\epsilon}( p, p)$ does not contain the student distribution and is therefore independent of $\bm{\theta}$. Combining these gradients with the coefficients in \cref{app_eq:sinkhorn}, we have
\begin{align}
    \nabla_{\bm{\theta}}D_{\text{S}}^{\epsilon}(p(y_l|\rvx,\rvy_{<l})&,q_{\bm{\theta}}(y_l|\rvx,\rvy_{<l}))\nonumber\\
    &=\sum_{i=1}^{d}\Big( \phi_i^{*(p,q_{\bm{\theta}})}(\rvx,\rvy_{<l})- \phi_i^{*(q_{\bm{\theta}},q_{\bm{\theta}})}(\rvx,\rvy_{<l})\Big)\nabla_{\bm{\theta}}q_{\bm{\theta}}(y_l=i|\rvx,\rvy_{<l}).\label{app_eq:sink4}
\end{align}

Now, we differentiate the WASD loss in \cref{eq:KD_wasd}. The stop-gradient operator preserves the potential difference in the forward pass and treats it as constant during backpropagation. Under the stop-gradient convention, we obtain
\begin{align}
    &\nabla_{\bm{\theta}} \mathcal{L}_{\text{WASD}}^{\epsilon}(\bm{\theta}) \nonumber\\
    &= \mathbb{E}_{(\rvx,\rvy)\sim\mathcal{D}}  \Bigg[  \sum_{l=1}^{L}\sum_{i=1}^{d}  \nabla_{\bm{\theta}} \Bigg\{ \operatorname{sg}\!\Big( \phi_i^{*(p,q_{\bm{\theta}})}(\rvx,\rvy_{<l}) - \phi_i^{*(q_{\bm{\theta}},q_{\bm{\theta}})}(\rvx,\rvy_{<l}) \Big) q_{\bm{\theta}}(y_l=i|\rvx,\rvy_{<l})  \Bigg\}   \Bigg] \label{app_eq:sink_last1}  \\
    &=  \mathbb{E}_{(\rvx,\rvy)\sim\mathcal{D}}\Bigg[  \sum_{l=1}^{L}\sum_{i=1}^{d} \operatorname{sg}\!\Big(  \phi_i^{*(p,q_{\bm{\theta}})}(\rvx,\rvy_{<l})  -\phi_i^{*(q_{\bm{\theta}},q_{\bm{\theta}})}(\rvx,\rvy_{<l}) \Big) \nabla_{\bm{\theta}}q_{\bm{\theta}}(y_l=i|\rvx,\rvy_{<l}) \Bigg] \label{app_eq:sink_last2} \\
    &=\mathbb{E}_{(\rvx,\rvy)\sim\mathcal{D}}  \Bigg[  \sum_{l=1}^{L}\sum_{i=1}^{d} \Big( \phi_i^{*(p,q_{\bm{\theta}})}(\rvx,\rvy_{<l})  - \phi_i^{*(q_{\bm{\theta}},q_{\bm{\theta}})}(\rvx,\rvy_{<l}) \Big) \nabla_{\bm{\theta}}q_{\bm{\theta}}(y_l=i|\rvx,\rvy_{<l})
    \Bigg] \label{app_eq:sink_last3} \\
    &= \mathbb{E}_{(\rvx,\rvy)\sim\mathcal{D}} \Bigg[  \sum_{l=1}^{L} \nabla_{\bm{\theta}}D_{\text{S}}^{\epsilon}\Big( p(y_l|\rvx,\rvy_{<l}), q_{\bm{\theta}}(y_l|\rvx,\rvy_{<l}) \Big) \Bigg] \label{app_eq:sink_last4} \\
    &= \nabla_{\bm{\theta}}\mathcal{L}_{\text{S}}^{\epsilon}(\bm{\theta}). \label{app_eq:sink_fin}
\end{align}
Here, \cref{app_eq:sink_last2} follows from the stop-gradient rule, \cref{app_eq:sink_last3} uses that $\operatorname{sg}$ preserves its input value, and \cref{app_eq:sink_last4} follows from \cref{app_eq:sink4}.
\end{proof}

\section{Additional discussion on related work}
\label{app_sec:related_work}



\subsection{Wasserstein distance and Sinkhorn divergence}
\label{app_subsec:sinkhorn_div}

The Wasserstein distance provides a geometry-aware discrepancy between probability distributions by measuring the minimum cost of transporting probability mass across the underlying space. Its practical use in machine learning has been enabled by entropy-regularized optimal transport and Sinkhorn iterations~\cite{cuturi2013sinkhorn,cuturi2014fast,peyre2019computational}.
Wasserstein-based distances have also been widely adopted across machine learning applications. In generative modeling, WGANs use the Wasserstein distance between the generator and data distributions to improve training stability and mitigate mode collapse~\cite{arjovsky2017wasserstein}. In reinforcement learning, Wasserstein natural gradient methods use the geometry induced by Wasserstein penalties to define efficient policy updates over behavioral distributions~\cite{moskovitz2020efficient}. More recently, Wasserstein and Sinkhorn trust-region policy optimization methods have been studied as alternatives to KL-based trust regions, deriving policy updates from Lagrangian duality and exploiting optimal-transport geometry in policy space~\cite{song2023provably}. 

Sinkhorn divergence was introduced as a scalable objective for large-scale generative modeling, combining entropy-regularized OT, Sinkhorn fixed-point iterations, self-similarity correction, and GPU-based automatic differentiation, while showing that the regularization parameter interpolates between Wasserstein distance and maximum mean discrepancy (MMD)~\cite{genevay2018learning}. 
Further theoretical analysis established key properties of Sinkhorn divergence, including positivity, convexity, and metrization of convergence in law, providing theoretical support for its use as a statistical discrepancy~\cite{feydy2019interpolating}. Sinkhorn-based objectives have since been extended to various distribution-alignment settings, further demonstrating their applicability beyond generative modeling~\cite{sanjabi2018convergence}.

\paragraph{Comparison with WPR~\cite{na2026semanticaware}}
WPR is closely related to our work in its use of token-space optimal transport. It replaces the KL regularizer in reinforcement learning from human preferences (RLHF) with an entropy-regularized Wasserstein distance over token distributions. WASD builds on WPR's token-embedding transport costs, nearest-$k$ cost-matrix truncation, and computation of optimal dual potentials through Sinkhorn iterations, extending this framework to teacher-student distribution matching.
The two methods differ in the role of the transport objective. WPR uses entropy-regularized Wasserstein distance to regularize a policy against a reference model during reward maximization. In KD, the teacher-student discrepancy serves as the primary training signal and should be minimized when the student distribution matches the teacher distribution. As discussed in \cref{subsec:method_objective}, entropic bias can prevent entropy-regularized Wasserstein distance from satisfying this requirement. Therefore, WASD adopts the Sinkhorn divergence, which corrects this bias by subtracting the teacher and student self-transport terms.
This difference also leads to distinct optimization procedures. WPR evaluates the cross-transport potential at sampled tokens and incorporates it as a scalar reward penalty during rollout-based RL optimization. WASD directly weights student probabilities over the vocabulary by the difference between teacher-student and student self-transport potentials. The student self-transport potential is computed through fixed-point iterations, and the resulting stop-gradient objective yields the Sinkhorn divergence gradient without backpropagating through the transport solver, as established in \cref{thm:sink_objective}. The ablation in \cref{subfig:abl_entwass} compares entropy-regularized Wasserstein distance with Sinkhorn divergence under identical distillation settings, showing the additional benefit of the self-similarity correction.



\subsection{Wasserstein distance in knowledge distillation}
\label{app_subsec:WD_KD}

Optimal transport (OT) has been explored for knowledge distillation before the recent interest in LLM distillation, mainly for representation- or feature-level transfer. WCoRD~\cite{chen2021wasserstein} leverages both the dual and primal forms of the Wasserstein distance: the dual form is used to construct a global contrastive objective for transferring teacher-student representation knowledge, while the primal form regularizes local mini-batch feature distribution matching. \citet{lohit2022model} similarly formulates model compression as minimizing optimal-transport costs between teacher and student intermediate feature distributions, computing transport costs during student training. In NLP, KNOT~\cite{bhardwaj2022knot} applies optimal transport to multi-teacher distillation by minimizing the transport cost between the student's label distribution and a weighted aggregation of teacher label distributions. These works demonstrate the usefulness of OT for transferring structured knowledge.

Several recent works have explored Wasserstein-based distances more directly as distillation discrepancies. SinKD~\cite{cui2024sinkhorn} applies the entropy-regularized Wasserstein distance to logit-based distillation and introduces a batch-wise formulation to capture geometric structure across samples, showing improvement over several $f$-divergence-based objectives on NLP classification benchmarks. However, its transport cost is defined over sample-level geometry, and it does not model semantic relations among vocabulary tokens. In computer vision, WKD~\cite{lv2024wasserstein} uses discrete Wasserstein distance for logit distillation by exploiting category interrelations, and continuous Wasserstein distance for feature distillation by matching Gaussian feature distributions. While conceptually related, WKD targets class-level and feature-level transfer in vision models rather than autoregressive next-token distributions in LLMs.

Another line of work uses optimal transport to address cross-tokenizer distillation, where teacher and student vocabularies differ. ULD~\cite{boizard2025towards} introduces a Wasserstein-based universal logit distillation loss and derives an efficient closed form under uniform-support and uniform-cost assumptions. MultiLevelOT~\cite{cui2025multi} further combines token-level and sequence-level optimal transport with multiple cost matrices to align cross-tokenizer logit distributions. MCW-KD~\cite{vuong2026mcw} extends this direction by optimizing multiple Wasserstein costs over hidden states and output distributions. These methods primarily address support mismatch or heterogeneous model alignment, whereas our setting assumes a shared vocabulary and focuses on the semantic geometry within that vocabulary. Therefore, these methods are complementary to our work: they relax the need for
a common vocabulary, while WASD studies what can be gained when a common
vocabulary is available and its embedding geometry can be used as an informative
transport cost.

A further distinction lies in the objective and its optimization form of the transport discrepancy. Prior OT-based KD methods typically use the OT solver to evaluate a primal transport cost, such as $\langle \mP, \mC \rangle$. For example, SinKD computes an entropy-regularized transport plan and minimizes its transport cost, WKD-L solves an entropy-regularized discrete WD objective, and ULD avoids Sinkhorn iterations through a closed form induced by a uniform-cost assumption. However, entropy-regularized OT introduces a self-similarity bias: the regularized transport cost between identical distributions is not zero, and minimizing this biased objective alone does not necessarily make the student distribution match the teacher distribution. This issue is particularly important for knowledge distillation, where the desired optimum is $q_{\bm{\theta}}=p$. Therefore, WASD adopts the Sinkhorn divergence rather than the entropy-regularized transport cost.
WASD further differs in how this objective is optimized. Although WCoRD and MCW-KD also utilize dual formulations, their roles are different: WCoRD uses a Wasserstein dual to build a contrastive representation objective, and MCW-KD uses entropic duality to formulate multi-cost heterogeneous alignment. In contrast, WASD derives a dual-potential weighted next-token objective whose stop-gradient formulation yields the gradient of the Sinkhorn divergence. The effective token weights are given by the excess optimal dual potentials, and the envelope theorem justifies computing gradients without differentiating through the Sinkhorn or fixed-point iterations. Thus, WASD differs from prior OT-based KD in the choice of token-embedding transport cost as well as in exposing the Sinkhorn divergence value gradient as a practical LLM distillation objective.

\section{Additional experimental settings}
\label{app_sec:exp_setting}

\subsection{Baselines}
\label{app_subsec:baselines}

GKD~\cite{agarwal2024onpolicy} employs the generalized Jensen-Shannon divergence with the distillation dataset and student-generated outputs. TAID~\cite{shing2025taid} proposes an adaptive assistant distribution based on the reverse KL divergence. DistiLLM~\cite{ko2024distillm} uses a skew KL divergence~\cite{lee2001effectiveness} and a skew reverse KL divergence to address optimization instability. DistiLLM-2~\cite{ko2025distillm2} proposes a contrastive distillation framework motivated by direct preference optimization~\cite{rafailov2023direct}. ABKD~\cite{wang2025abkd} employs $\alpha$-$\beta$-divergence~\cite{cichocki2011generalized} to improve the balance of probability mass allocation. CSD~\cite{kim2026distillation} uses discrete score matching to align teacher-student relative logit differences, avoiding softmax-induced smoothing. AMiD~\cite{shin2026amid} proposes a unified and generalized framework for assistant distributions and optimization in assistant-based distillation methods.

\subsection{Training and evaluation details}
\label{app_subsec:training}

All training experiments were conducted using a single NVIDIA RTX PRO 6000 GPU, while evaluation was performed on a single NVIDIA RTX 3090 GPU. Unless otherwise specified, the cost matrix for WASD was constructed using pairwise distances between the teacher model's token embeddings. We primarily employed cosine distance as the transport cost metric, while L2 distance was used for task-specific distillation experiments. As mentioned in the main text, we adopted the nearest-$k$ truncation strategy proposed by \citet{na2026semanticaware} for efficient computation, storing only the costs corresponding to the $k$ nearest neighbors and using sparse matrix multiplication. We set $k=8$ for most experiments, except for Qwen2.5, where we used $k=2$ to improve training efficiency. Unless otherwise specified in ablation studies, the entropy regularization hyperparameter was fixed to $\epsilon=0.001$, and both Sinkhorn and fixed-point iterations were run for 10 steps. The learning rate followed the default settings of each codebase: $1\times10^{-4}$ for GPT-2 and OpenLLaMA2 instruction-following experiments, $1\times10^{-5}$ for Gemma, and $5\times10^{-5}$ for Qwen. Inspired by the temperature scaling strategy used in CSD~\cite{kim2026distillation}, we additionally applied temperature scaling to the student model during training to control output diversity, typically using a temperature of 2. Without this scaling, WASD often produces more diverse outputs than the baselines at the default decoding temperature, potentially reducing ROUGE-L scores. Adjusting the decoding temperature can improve the quality-diversity trade-off, as shown in \cref{subfig:motivation_trade_off,app_fig:trade_off_main}.

The GPT-2 and OpenLLaMA2 general instruction-following experiments were conducted based on the implementation of AMiD~\cite{shin2026amid}.\footnote{\url{https://github.com/aailab-kaist/AMiD}} This implementation largely follows the codebases of MiniLLM~\cite{gu2024minillm},\footnote{\url{https://github.com/microsoft/LMOps/blob/main/minillm}} DistiLLM~\cite{ko2024distillm},\footnote{\url{https://github.com/jongwooko/distillm}} and ABKD~\cite{wang2025abkd},\footnote{\url{https://github.com/ghwang-s/abkd}} and we adopted the baseline settings provided in these implementations. 
Each setup was trained for 20 epochs, with checkpoints saved after every epoch. Final results were obtained from the checkpoint achieving the best validation ROUGE-L score. Consistent with previous studies~\cite{kim2026distillation,shin2026amid}, we evaluated each model using five evaluation seeds (10, 20, 30, 40, 50) and report the mean and standard deviation across these evaluations.
For the GPT-4 feedback evaluation in \cref{tab:gpt4_feedback}, we followed prior distillation studies~\cite{gu2024minillm,kim2026distillation,ko2024distillm}: GPT-4 assigns a score from 1 to 10 to each model response and reference answer, and we report the total model-response score divided by the total reference-answer score, multiplied by 100. For Qwen2.5 instruction-following distillation, we used the DistiLLM-2 codebase and followed the setups of DistiLLM-2 and CSD~\cite{kim2026distillation,ko2025distillm2}, distilling Qwen2.5-7B-Instruct into Qwen2.5-1.5B-Instruct. We evaluated on AlpacaEval~\cite{alpaca_eval}, Evol-Instruct~\cite{xu2024wizardlm}, and UltraFeedback~\cite{cui2024ultrafeedback} using GPT-4o-mini as the judge and report win rates against reference responses. For the controlled cost-matrix ablation in \cref{tab:transport_cost}, we followed the GPT-2 XL (1.5B) to Base (0.1B) setup in \cref{tab:main}, varying only the transport cost. The semantic cost uses cosine distances between teacher token embeddings, the permuted cost randomly reassigns token relations, and the uniform cost assigns a constant transport cost. These results are averaged over three training seeds, distinct from the evaluation seeds used above.

The Gemma-based task-specific distillation experiments followed the setup of \citet{xu2025speculative}.\footnote{\url{https://github.com/google-research/google-research/tree/master/speculative_kd}} Teacher supervised fine-tuning was conducted on the full datasets for 3 epochs for summarization and arithmetic reasoning and 10 epochs for translation. Validation was performed every 16 optimization steps, and the model checkpoint achieving the best validation performance was selected. Across all teacher training experiments, we used a batch size of 128 and a learning rate of $1\times 10^{-5}$. For KD, all baseline methods and our approach used the same checkpoint and were trained using approximately 1K samples per task. Summarization and arithmetic reasoning were distilled for 3 epochs, whereas translation was distilled for 10 epochs. Checkpoints were saved every 25 steps, and evaluation was performed using the checkpoint with the best validation loss.
The code generation experiments with Qwen2.5 followed the experimental setup of DistiLLM-2~\cite{ko2025distillm2}.\footnote{\url{https://github.com/jongwooko/distillm-2}} All methods were trained for one epoch using a fixed batch size of 32.

\section{Additional experimental results}
\label{app_sec:exp_results}

\begin{figure}[t]
    \centering
    \includegraphics[width=\linewidth]{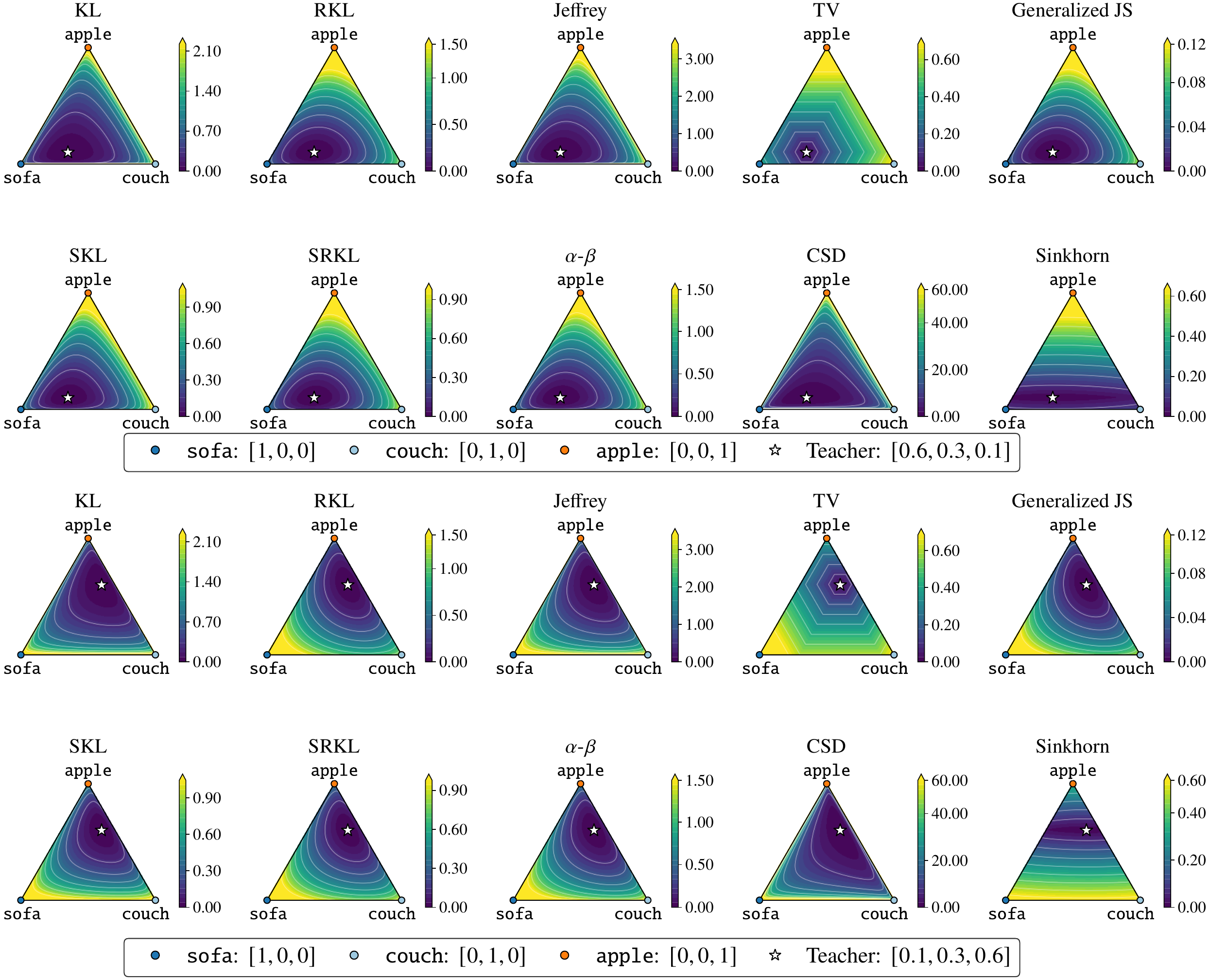}
    \caption{Loss contours of various divergences on the probability simplex for a toy next-token prediction task with different teacher distributions, extending the visualization in \cref{subfig:motivation_loss_surfaces}.}
    \label{app_fig:motivation_loss_surfaces}
\end{figure}

\begin{figure}[t]
    \centering
    \includegraphics[width=\linewidth]{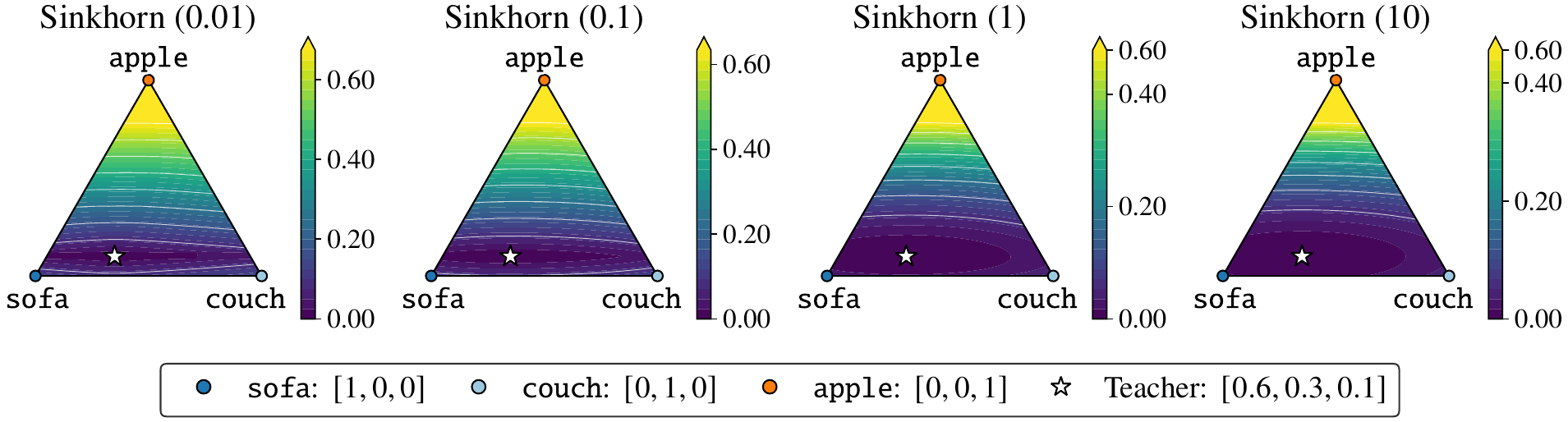}
    \caption{Loss contours of Sinkhorn divergence on the probability simplex under different entropy regularization hyperparameters $\epsilon$, where the value of $\epsilon$ is indicated in parentheses in each subplot title.}
    \label{app_fig:loss_sink_eps}
\end{figure}

\begin{figure}[tp]   
    \centering
    \begin{minipage}[t]{0.61\textwidth}
        \vspace*{0pt}
        \centering
        \includegraphics[width=0.9\linewidth]{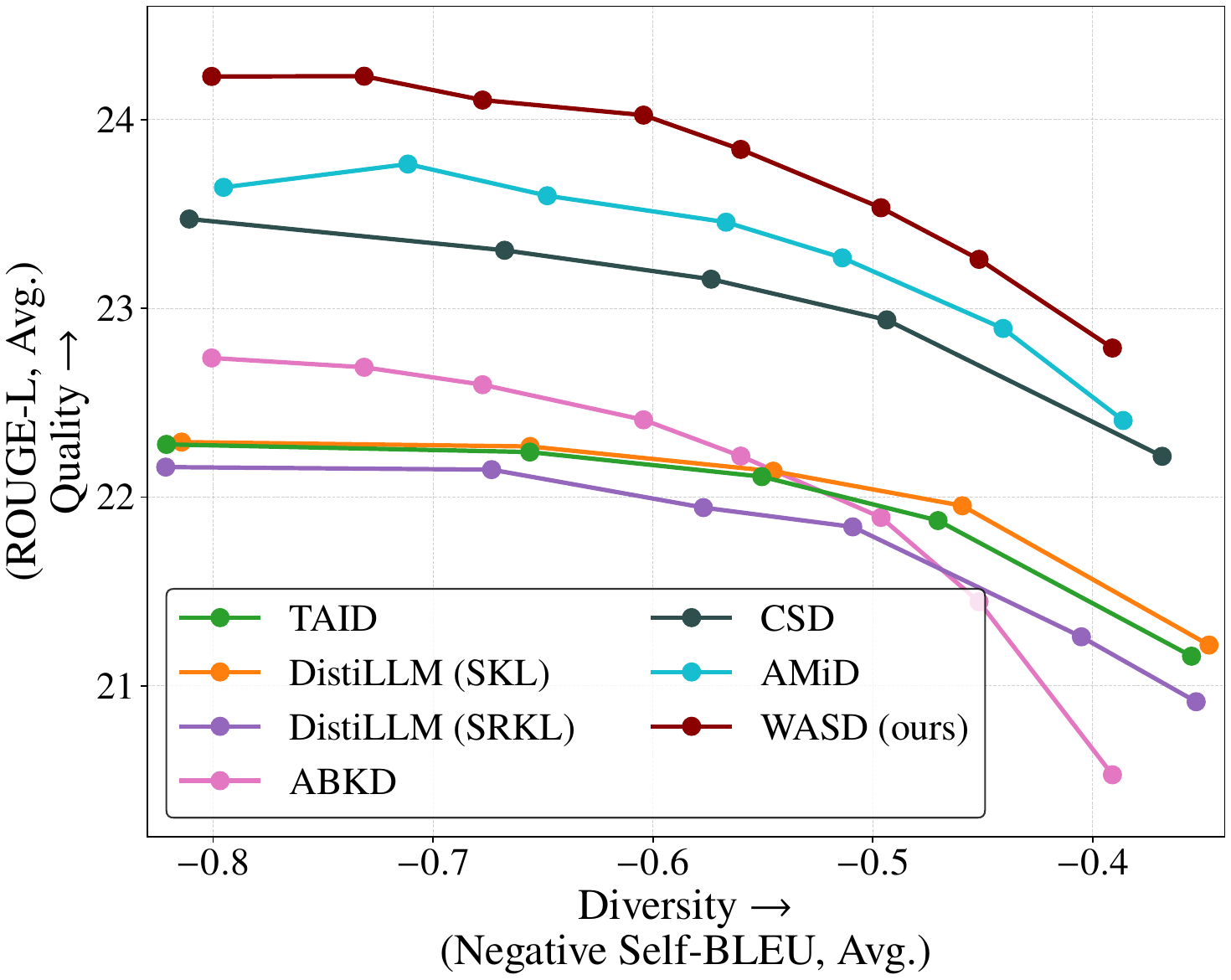}
        \caption{Quality-diversity trade-off (ROUGE-L vs. Self-BLEU) on five instruction-following benchmarks for \texttt{GPT-2} (1.5B $\rightarrow$ 0.1B), varying the decoding temperature, using the models reported in \cref{tab:main}.}
        \label{app_fig:trade_off_main}
    \end{minipage}
    \hfill
    \begin{minipage}[t]{0.35\textwidth}
        \vspace*{0pt}
        \centering
        \includegraphics[width=0.9\linewidth]{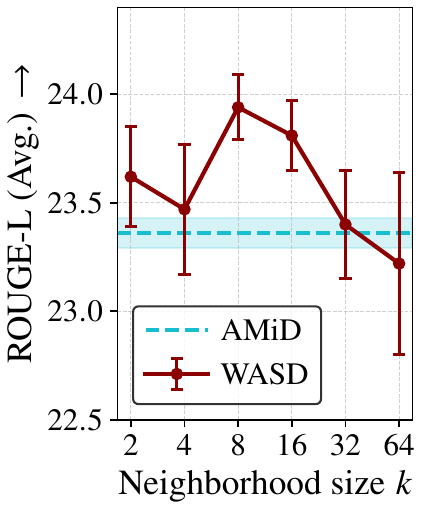}
        \caption{Sensitivity analysis of the neighborhood size $k$ for \texttt{GPT-2-1.5B} $\rightarrow$ \texttt{GPT-2-0.1B}. We report ROUGE-L across five instruction-following benchmarks over three training seeds.}
        \label{app_fig:k_sensitivity}
    \end{minipage}
\end{figure}

\begin{table}[tp]
    \centering
    \caption{Ablation study of WASD on five general instruction-following benchmarks for \texttt{GPT-2 XL (1.5B) $\rightarrow$ GPT-2 Base (0.1B)} under the same experimental setup as Table~\ref{tab:exp_divergence}.}
    \begin{tabular}{lc}
        \toprule
        Method & Avg. ROUGE-L ($\uparrow$) \\
        \midrule
        GPT-2 XL (Teacher) &  23.35 \\
        \midrule
        CSD~\cite{kim2026distillation} &  {21.46} \\
        \midrule
        \textbf{WASD} \\
        \quad Default settings & {22.47} \\
        \quad Cost change (cos $\rightarrow$ L2) &  22.45 \\
        \quad Increased truncation $k$ (8 $\rightarrow$ 16) &  22.45 \\
        \bottomrule
    \end{tabular}%
    \label{app_tab:abl}
\end{table}

\paragraph{Loss contours of divergence variants}
\cref{app_fig:motivation_loss_surfaces} presents an extended version of \cref{subfig:motivation_loss_surfaces}, visualizing loss contours on the probability simplex for a toy next-token prediction task under various divergence formulations. Across all divergences except Sinkhorn divergence, the loss landscape is primarily determined by probability assignments to vocabulary indices rather than token semantic information. Consequently, semantically similar tokens such as \texttt{sofa} and \texttt{couch} are treated as entirely distinct categories. As the teacher distribution changes, the loss contours shift according to assigned probability values instead of reflecting token semantics. This behavior consistently appears across all divergence formulations except Sinkhorn divergence.

\paragraph{Effect of entropy regularization in Sinkhorn divergence}
\cref{app_fig:loss_sink_eps} visualizes the loss contours of the Sinkhorn divergence on the probability simplex under different entropy regularization hyperparameters $\epsilon$. As $\epsilon$ increases, the landscape becomes progressively flatter, indicating that the transport kernel becomes less sensitive to semantic differences between tokens. This behavior aligns with the performance trend observed in \cref{subfig:abl_eps} of the main manuscript, where excessively large entropy regularization leads to degraded performance.

\paragraph{Quality-diversity trade-off with recent baselines}
\cref{app_fig:trade_off_main} illustrates the quality-diversity trade-off obtained by varying the decoding temperature on five instruction-following benchmarks, using the models reported in \cref{tab:main}. WASD consistently forms a favorable frontier compared to existing baselines, achieving higher ROUGE-L scores across different diversity regimes. These results suggest that the semantic-aware transport objective of WASD enables the model to preserve generation quality even under decoding settings that encourage more diverse outputs.

\paragraph{Sensitivity analysis of the neighborhood size $k$}
Following WPR~\cite{na2026semanticaware}, $k$ denotes the number of retained non-self nearest neighbors. Then, the graph is symmetrized and augmented with zero-cost self-loops, so its effective degree may exceed $k$. We conduct a sensitivity analysis over $k \in \{2,4,8,16,32,64\}$ for GPT-2 1.5B-to-0.1B distillation using three training seeds, under the same experimental setup as Table~\ref{tab:main}. As shown in \cref{app_fig:k_sensitivity}, performance does not improve monotonically with $k$: the highest mean ROUGE-L is achieved at $k=8$, while smaller neighborhoods remain competitive. Larger neighborhoods provide no consistent improvement, suggesting that including more distant token relations is not necessarily beneficial for distillation. These results support the use of small neighborhoods for efficient distillation while retaining cross-token semantic transport beyond the index-wise matching of KL divergence.

\paragraph{Ablation study on WASD}
\cref{app_tab:abl} presents an ablation study on several design choices of WASD. Replacing cosine distance with L2 distance as the transport cost yields comparable performance while still consistently outperforming the strongest baseline, CSD. Similarly, increasing the truncation parameter from $k=8$ to $k=16$ produces nearly identical results, suggesting that sparse transport with a relatively small number of nearest neighbors is sufficient for effective distillation.

\begin{table}[tp]
    \centering
    \caption{Repeated task-specific distillation experiments under the same settings as \cref{tab:task_specific}. Results are reported as mean $\pm$ standard deviation over three training seeds.}
    \begin{tabular}{lccc}
        \toprule
         & Translation & Summarization & Arithmetic \\
        Method & COMET ($\uparrow$) & ROUGE-L ($\uparrow$) & Accuracy ($\uparrow$) \\
        \midrule
        CSD~\cite{kim2026distillation} & \rateinline{73.70}{0.08} & \rateinline{34.75}{0.30} & \rateinline{23.38}{0.57} \\
        AMiD~\cite{shin2026amid} & \rateinline{73.78}{0.14} & \rateinline{34.65}{0.39} & \rateinline{24.31}{0.96} \\
        \textbf{WASD (ours)} & \rateinline{\textbf{74.40}}{0.31} & \rateinline{\textbf{34.96}}{0.18} & \rateinline{\textbf{24.62}}{0.25} \\
        \bottomrule
    \end{tabular}%
    \label{app_tab:repeat}
\end{table}

\begin{table}[tp]
    \centering
    \caption{Cross-family distillation from \texttt{Qwen1.5-1.8B} to \texttt{GPT-2-0.1B} within the DSKDv2 framework~\cite{zhang2025dual}. We report ROUGE-L with greedy decoding after three training epochs.}
    \begin{tabular}{lcccccc}
        \toprule
        Method & Dolly Eval & Self Inst & Vicuna & Super NI & UnNI & Average \\
        \midrule
        DSKDv2 (FKL) & 23.58 & 12.11 & 14.26 & \textbf{26.64} & 25.76 & 20.47 \\
        + WASD-S & 24.68 & 12.55 & 16.09 & 25.09 & 26.34 & 20.95 \\
        + WASD-S+T & \textbf{24.80} & \textbf{12.74} & \textbf{16.33} & 26.32 & \textbf{26.39} & \textbf{21.31} \\
        \bottomrule
    \end{tabular}%
    \label{app_tab:cross}
    \vspace{-1em}
\end{table}

\begin{table}[tp]
    \centering
    \caption{Diversity and semantic coherence on Dolly Eval at different decoding temperatures.}
    \begin{tabular}{ccccc}
        \toprule
        & \multicolumn{2}{c}{Distinct-2}
        & \multicolumn{2}{c}{Semantic coherence ($\uparrow$)} \\
        \cmidrule(lr){2-3}\cmidrule(lr){4-5}
        Temperature $T$ & AMiD & WASD & AMiD & WASD \\
        \midrule
        0.7 & 0.262 & 0.270 & 0.879 & \textbf{0.890} \\
        1.0 & 0.309 & 0.312 & 0.850 & \textbf{0.866} \\
        1.3 & 0.357 & 0.359 & 0.823 & \textbf{0.844} \\
        1.5 & 0.390 & 0.388 & 0.807 & \textbf{0.827} \\
        \bottomrule
    \end{tabular}
    \label{app_tab:semantic_coherence}
\end{table}

\begin{table}[tp]
    \centering
    \caption{Top next-token probabilities at a shared DialogSum context with the prefix ``Sarah is upset because she couldn't''.}
    \begin{tabular}{ll}
        \toprule
        Model & Top next tokens and probabilities \\
        \midrule
        Teacher & speak (0.19), express (0.19), share (0.19), get (0.19), voice (0.03) \\
        WASD & get (0.36), express (0.25), convince (0.16), share (0.07), finish (0.03) \\
        AMiD & get (0.63), express (0.15), convince (0.14), share (0.02), make (0.01) \\
        \bottomrule
    \end{tabular}%
    \label{app_tab:token_dist}
\end{table}

\paragraph{Repeated training results}
We repeat the translation, summarization, and arithmetic reasoning experiments in \cref{tab:task_specific} with three training seeds to examine robustness to training variation. As shown in \cref{app_tab:repeat}, WASD achieves the highest mean performance on all three tasks. For translation, WASD improves over AMiD in all three seeds, with a statistically significant difference under a paired t-test ($p=0.048$). Although the gains on the other tasks are small, the repeated-seed results show a consistent trend across the three task-specific settings.

\paragraph{Cross-model-family distillation}
To examine applicability beyond a single model family, we integrate WASD into DSKD~\cite{zhang2024dual}, a cross-family distillation framework for models with different vocabularies and tokenizers, using the updated DSKDv2 codebase~\cite{zhang2025dual}. DSKD uses cross-model projectors to express teacher and student predictions in each other's output spaces, enabling distribution matching between models with different architectures and tokenizers. We instantiate the DSKDv2 baseline with forward KL (FKL) divergence, using Qwen1.5-1.8B~\cite{bai2023qwen} as the teacher and GPT-2-0.1B as the student. WASD-S replaces the FKL-based distribution-matching term only in the student output space, whereas WASD-S+T replaces the corresponding terms in both the student and teacher output spaces. All other components remain unchanged. In each space, WASD uses a transport cost matrix constructed from the corresponding token-embedding geometry.

We use the Dolly split and evaluation benchmarks adopted by DSKD. As a controlled proof of concept, we train all compared methods for three epochs under identical optimization settings and evaluate using greedy decoding. As shown in \cref{app_tab:cross}, WASD-S improves average ROUGE-L from 20.47 to 20.95, while WASD-S+T further increases it to 21.31. WASD-S+T improves over the baseline on four of the five datasets and outperforms WASD-S on all five datasets. These results suggest that WASD can be extended to cross-family distillation through an existing cross-model projection and alignment framework. They also suggest that teacher and student vocabulary geometries provide complementary transport-based supervision.

\paragraph{Semantic coherence and token-level analysis}
We examine coherence at the response level and probability allocation at the token level.
For each Dolly evaluation prompt, we sample eight responses at decoding temperatures $T \in \{ 0.7, 1.0, 1.3, 1.5\}$. We measure lexical diversity using Distinct-2 and semantic coherence using the mean pairwise cosine similarity between response embeddings from the external sentence encoder \texttt{all-MiniLM-L6-v2}~\cite{reimers2019sentence}. We compare WASD against AMiD under the same assistant-distribution framework in the GPT-2 1.5B-to-0.1B setting. As shown in \cref{app_tab:semantic_coherence}, the two methods achieve nearly identical Distinct-2, while WASD achieves higher coherence at every temperature. The differences are statistically significant under paired Wilcoxon signed-rank tests ($p<0.01$) and generally increase as sampling diversity grows. Therefore, at comparable lexical diversity, WASD produces varied responses that remain more semantically coherent according to this metric.

At the token level, we inspect the learned next-token distributions at a shared DialogSum~\cite{chen-etal-2021-dialogsum} context where the teacher supports several plausible continuations. \cref{app_tab:token_dist} shows the top next-token probabilities given the prefix  ``Sarah is upset because she couldn't''. Although both students greedily select \texttt{get}, WASD assigns more probability to the teacher-supported alternatives \texttt{express} and \texttt{share}, whereas AMiD concentrates more probability on \texttt{get}. This example illustrates that WASD can retain a broader set of semantically related continuations supported by the teacher.

\section{Limitations}

The proposed method introduces additional computational overhead compared to standard divergence-based distillation objectives due to the Sinkhorn iterations required for solving the entropy-regularized optimal transport problem. In our experiments, WASD requires approximately twice the training time than conventional baselines and increases training GPU memory usage by approximately 20\%. Although the proposed formulation remains practically feasible for large-scale LLM distillation, improving the efficiency of Sinkhorn-based optimization remains an important direction for future work. In particular, incorporating recent advances in fast or memory-efficient Sinkhorn algorithms may further improve the practicality of semantic-aware distillation.

Another limitation is the construction of the transport cost matrix. In this work, we use distances derived from the teacher model's token embedding space as a generic measure of semantic similarity. While this simple design already provides consistent improvements, it may not fully capture task-specific or domain-specific token information. Designing more expressive or adaptive cost matrices is a promising direction for future research.

\section{Broader impacts}

This work aims to improve the efficiency of large language models through knowledge distillation, which may help reduce the computational and energy costs associated with deploying increasingly large models. More efficient distillation can improve accessibility of LLM technologies in resource-constrained environments and lower the environmental impact of inference. At the same time, advances in LLM distillation may also contribute to the wider deployment of powerful language models, which can amplify existing societal risks associated with LLMs, including misinformation generation, biased outputs, harmful content generation, and misuse in automated systems. Since the proposed method improves the ability  of smaller models to imitate stronger teacher models, these risks may become more accessible at lower computational cost. Therefore, we encourage future research on responsible deployment, safety alignment, and monitoring mechanisms alongside advances in efficient LLM compression.


\newpage
\section*{NeurIPS Paper Checklist}

\begin{enumerate}

\item {\bf Claims}
    \item[] Question: Do the main claims made in the abstract and introduction accurately reflect the paper's contributions and scope?
    \item[] Answer: \answerYes{} 
    \item[] Justification: {The abstract and introduction clearly describe the main contribution of the paper.}
    \item[] Guidelines:
    \begin{itemize}
        \item The answer \answerNA{} means that the abstract and introduction do not include the claims made in the paper.
        \item The abstract and/or introduction should clearly state the claims made, including the contributions made in the paper and important assumptions and limitations. A \answerNo{} or \answerNA{} answer to this question will not be perceived well by the reviewers. 
        \item The claims made should match theoretical and experimental results, and reflect how much the results can be expected to generalize to other settings. 
        \item It is fine to include aspirational goals as motivation as long as it is clear that these goals are not attained by the paper. 
    \end{itemize}

\item {\bf Limitations}
    \item[] Question: Does the paper discuss the limitations of the work performed by the authors?
    \item[] Answer: \answerYes{} 
    \item[] Justification: {Limitations such as computational overhead from Sinkhorn iterations have been discussed in Section 4 and Appendix E.}
    \item[] Guidelines:
    \begin{itemize}
        \item The answer \answerNA{} means that the paper has no limitation while the answer \answerNo{} means that the paper has limitations, but those are not discussed in the paper. 
        \item The authors are encouraged to create a separate ``Limitations'' section in their paper.
        \item The paper should point out any strong assumptions and how robust the results are to violations of these assumptions (e.g., independence assumptions, noiseless settings, model well-specification, asymptotic approximations only holding locally). The authors should reflect on how these assumptions might be violated in practice and what the implications would be.
        \item The authors should reflect on the scope of the claims made, e.g., if the approach was only tested on a few datasets or with a few runs. In general, empirical results often depend on implicit assumptions, which should be articulated.
        \item The authors should reflect on the factors that influence the performance of the approach. For example, a facial recognition algorithm may perform poorly when image resolution is low or images are taken in low lighting. Or a speech-to-text system might not be used reliably to provide closed captions for online lectures because it fails to handle technical jargon.
        \item The authors should discuss the computational efficiency of the proposed algorithms and how they scale with dataset size.
        \item If applicable, the authors should discuss possible limitations of their approach to address problems of privacy and fairness.
        \item While the authors might fear that complete honesty about limitations might be used by reviewers as grounds for rejection, a worse outcome might be that reviewers discover limitations that aren't acknowledged in the paper. The authors should use their best judgment and recognize that individual actions in favor of transparency play an important role in developing norms that preserve the integrity of the community. Reviewers will be specifically instructed to not penalize honesty concerning limitations.
    \end{itemize}

\item {\bf Theory assumptions and proofs}
    \item[] Question: For each theoretical result, does the paper provide the full set of assumptions and a complete (and correct) proof?
    \item[] Answer: \answerYes{} 
    \item[] Justification: The paper provides formal statements and proofs for the main theoretical results in Section 3, with detailed derivations in Appendix A.
    \item[] Guidelines:
    \begin{itemize}
        \item The answer \answerNA{} means that the paper does not include theoretical results. 
        \item All the theorems, formulas, and proofs in the paper should be numbered and cross-referenced.
        \item All assumptions should be clearly stated or referenced in the statement of any theorems.
        \item The proofs can either appear in the main paper or the supplemental material, but if they appear in the supplemental material, the authors are encouraged to provide a short proof sketch to provide intuition. 
        \item Inversely, any informal proof provided in the core of the paper should be complemented by formal proofs provided in appendix or supplemental material.
        \item Theorems and Lemmas that the proof relies upon should be properly referenced. 
    \end{itemize}

    \item {\bf Experimental result reproducibility}
    \item[] Question: Does the paper fully disclose all the information needed to reproduce the main experimental results of the paper to the extent that it affects the main claims and/or conclusions of the paper (regardless of whether the code and data are provided or not)?
    \item[] Answer: \answerYes{} 
    \item[] Justification: The experimental settings, baselines, evaluation metrics, and protocols are described in Section 4 and Appendix C.
    \item[] Guidelines:
    \begin{itemize}
        \item The answer \answerNA{} means that the paper does not include experiments.
        \item If the paper includes experiments, a \answerNo{} answer to this question will not be perceived well by the reviewers: Making the paper reproducible is important, regardless of whether the code and data are provided or not.
        \item If the contribution is a dataset and\slash or model, the authors should describe the steps taken to make their results reproducible or verifiable. 
        \item Depending on the contribution, reproducibility can be accomplished in various ways. For example, if the contribution is a novel architecture, describing the architecture fully might suffice, or if the contribution is a specific model and empirical evaluation, it may be necessary to either make it possible for others to replicate the model with the same dataset, or provide access to the model. In general. releasing code and data is often one good way to accomplish this, but reproducibility can also be provided via detailed instructions for how to replicate the results, access to a hosted model (e.g., in the case of a large language model), releasing of a model checkpoint, or other means that are appropriate to the research performed.
        \item While NeurIPS does not require releasing code, the conference does require all submissions to provide some reasonable avenue for reproducibility, which may depend on the nature of the contribution. For example
        \begin{enumerate}
            \item If the contribution is primarily a new algorithm, the paper should make it clear how to reproduce that algorithm.
            \item If the contribution is primarily a new model architecture, the paper should describe the architecture clearly and fully.
            \item If the contribution is a new model (e.g., a large language model), then there should either be a way to access this model for reproducing the results or a way to reproduce the model (e.g., with an open-source dataset or instructions for how to construct the dataset).
            \item We recognize that reproducibility may be tricky in some cases, in which case authors are welcome to describe the particular way they provide for reproducibility. In the case of closed-source models, it may be that access to the model is limited in some way (e.g., to registered users), but it should be possible for other researchers to have some path to reproducing or verifying the results.
        \end{enumerate}
    \end{itemize}

\item {\bf Open access to data and code}
    \item[] Question: Does the paper provide open access to the data and code, with sufficient instructions to faithfully reproduce the main experimental results, as described in supplemental material?
    \item[] Answer: \answerYes{} 
    \item[] Justification: The paper indicates that the implementation is publicly available and provides sufficient experimental details in \cref{app_sec:exp_setting}. All datasets and pretrained models used in the experiments are publicly available and appropriately cited.
    \item[] Guidelines:
    \begin{itemize}
        \item The answer \answerNA{} means that paper does not include experiments requiring code.
        \item Please see the NeurIPS code and data submission guidelines (\url{https://neurips.cc/public/guides/CodeSubmissionPolicy}) for more details.
        \item While we encourage the release of code and data, we understand that this might not be possible, so \answerNo{} is an acceptable answer. Papers cannot be rejected simply for not including code, unless this is central to the contribution (e.g., for a new open-source benchmark).
        \item The instructions should contain the exact command and environment needed to run to reproduce the results. See the NeurIPS code and data submission guidelines (\url{https://neurips.cc/public/guides/CodeSubmissionPolicy}) for more details.
        \item The authors should provide instructions on data access and preparation, including how to access the raw data, preprocessed data, intermediate data, and generated data, etc.
        \item The authors should provide scripts to reproduce all experimental results for the new proposed method and baselines. If only a subset of experiments are reproducible, they should state which ones are omitted from the script and why.
        \item At submission time, to preserve anonymity, the authors should release anonymized versions (if applicable).
        \item Providing as much information as possible in supplemental material (appended to the paper) is recommended, but including URLs to data and code is permitted.
    \end{itemize}

\item {\bf Experimental setting/details}
    \item[] Question: Does the paper specify all the training and test details (e.g., data splits, hyperparameters, how they were chosen, type of optimizer) necessary to understand the results?
    \item[] Answer: \answerYes{} 
    \item[] Justification: All experimental setups, hyperparameters, datasets, and evaluation metrics are detailed in Section 4 and Appendix C.
    \item[] Guidelines:
    \begin{itemize}
        \item The answer \answerNA{} means that the paper does not include experiments.
        \item The experimental setting should be presented in the core of the paper to a level of detail that is necessary to appreciate the results and make sense of them.
        \item The full details can be provided either with the code, in appendix, or as supplemental material.
    \end{itemize}

\item {\bf Experiment statistical significance}
    \item[] Question: Does the paper report error bars suitably and correctly defined or other appropriate information about the statistical significance of the experiments?
    \item[] Answer: \answerYes{} 
    \item[] Justification: Experimental results are reported with standard deviations across five evaluation seeds in Tables 2 and 3.
    \item[] Guidelines:
    \begin{itemize}
        \item The answer \answerNA{} means that the paper does not include experiments.
        \item The authors should answer \answerYes{} if the results are accompanied by error bars, confidence intervals, or statistical significance tests, at least for the experiments that support the main claims of the paper.
        \item The factors of variability that the error bars are capturing should be clearly stated (for example, train/test split, initialization, random drawing of some parameter, or overall run with given experimental conditions).
        \item The method for calculating the error bars should be explained (closed form formula, call to a library function, bootstrap, etc.)
        \item The assumptions made should be given (e.g., Normally distributed errors).
        \item It should be clear whether the error bar is the standard deviation or the standard error of the mean.
        \item It is OK to report 1-sigma error bars, but one should state it. The authors should preferably report a 2-sigma error bar than state that they have a 96\% CI, if the hypothesis of Normality of errors is not verified.
        \item For asymmetric distributions, the authors should be careful not to show in tables or figures symmetric error bars that would yield results that are out of range (e.g., negative error rates).
        \item If error bars are reported in tables or plots, the authors should explain in the text how they were calculated and reference the corresponding figures or tables in the text.
    \end{itemize}

\item {\bf Experiments compute resources}
    \item[] Question: For each experiment, does the paper provide sufficient information on the computer resources (type of compute workers, memory, time of execution) needed to reproduce the experiments?
    \item[] Answer: \answerYes{} 
    \item[] Justification: Details about computational resources are provided in Appendix C.
    \item[] Guidelines:
    \begin{itemize}
        \item The answer \answerNA{} means that the paper does not include experiments.
        \item The paper should indicate the type of compute workers CPU or GPU, internal cluster, or cloud provider, including relevant memory and storage.
        \item The paper should provide the amount of compute required for each of the individual experimental runs as well as estimate the total compute. 
        \item The paper should disclose whether the full research project required more compute than the experiments reported in the paper (e.g., preliminary or failed experiments that didn't make it into the paper). 
    \end{itemize}
    
\item {\bf Code of ethics}
    \item[] Question: Does the research conducted in the paper conform, in every respect, with the NeurIPS Code of Ethics \url{https://neurips.cc/public/EthicsGuidelines}?
    \item[] Answer: \answerYes{} 
    \item[] Justification: The research complies with the NeurIPS Code of Ethics.
    \item[] Guidelines: 
    \begin{itemize}
        \item The answer \answerNA{} means that the authors have not reviewed the NeurIPS Code of Ethics.
        \item If the authors answer \answerNo, they should explain the special circumstances that require a deviation from the Code of Ethics.
        \item The authors should make sure to preserve anonymity (e.g., if there is a special consideration due to laws or regulations in their jurisdiction).
    \end{itemize}

\item {\bf Broader impacts}
    \item[] Question: Does the paper discuss both potential positive societal impacts and negative societal impacts of the work performed?
    \item[] Answer: \answerYes{} 
    \item[] Justification: Potential societal impacts are discussed in Appendix F.
    \item[] Guidelines:
    \begin{itemize}
        \item The answer \answerNA{} means that there is no societal impact of the work performed.
        \item If the authors answer \answerNA{} or \answerNo, they should explain why their work has no societal impact or why the paper does not address societal impact.
        \item Examples of negative societal impacts include potential malicious or unintended uses (e.g., disinformation, generating fake profiles, surveillance), fairness considerations (e.g., deployment of technologies that could make decisions that unfairly impact specific groups), privacy considerations, and security considerations.
        \item The conference expects that many papers will be foundational research and not tied to particular applications, let alone deployments. However, if there is a direct path to any negative applications, the authors should point it out. For example, it is legitimate to point out that an improvement in the quality of generative models could be used to generate Deepfakes for disinformation. On the other hand, it is not needed to point out that a generic algorithm for optimizing neural networks could enable people to train models that generate Deepfakes faster.
        \item The authors should consider possible harms that could arise when the technology is being used as intended and functioning correctly, harms that could arise when the technology is being used as intended but gives incorrect results, and harms following from (intentional or unintentional) misuse of the technology.
        \item If there are negative societal impacts, the authors could also discuss possible mitigation strategies (e.g., gated release of models, providing defenses in addition to attacks, mechanisms for monitoring misuse, mechanisms to monitor how a system learns from feedback over time, improving the efficiency and accessibility of ML).
    \end{itemize}
    
\item {\bf Safeguards}
    \item[] Question: Does the paper describe safeguards that have been put in place for responsible release of data or models that have a high risk for misuse (e.g., pre-trained language models, image generators, or scraped datasets)?
    \item[] Answer: \answerNA{} 
    \item[] Justification: The paper does not release new large-scale pretrained models or datasets that pose significant misuse risks.
    \item[] Guidelines:
    \begin{itemize}
        \item The answer \answerNA{} means that the paper poses no such risks.
        \item Released models that have a high risk for misuse or dual-use should be released with necessary safeguards to allow for controlled use of the model, for example by requiring that users adhere to usage guidelines or restrictions to access the model or implementing safety filters. 
        \item Datasets that have been scraped from the Internet could pose safety risks. The authors should describe how they avoided releasing unsafe images.
        \item We recognize that providing effective safeguards is challenging, and many papers do not require this, but we encourage authors to take this into account and make a best faith effort.
    \end{itemize}

\item {\bf Licenses for existing assets}
    \item[] Question: Are the creators or original owners of assets (e.g., code, data, models), used in the paper, properly credited and are the license and terms of use explicitly mentioned and properly respected?
    \item[] Answer: \answerYes{} 
    \item[] Justification: We cite the code and datasets in Section 4 and the appendix.
    \item[] Guidelines:
    \begin{itemize}
        \item The answer \answerNA{} means that the paper does not use existing assets.
        \item The authors should cite the original paper that produced the code package or dataset.
        \item The authors should state which version of the asset is used and, if possible, include a URL.
        \item The name of the license (e.g., CC-BY 4.0) should be included for each asset.
        \item For scraped data from a particular source (e.g., website), the copyright and terms of service of that source should be provided.
        \item If assets are released, the license, copyright information, and terms of use in the package should be provided. For popular datasets, \url{paperswithcode.com/datasets} has curated licenses for some datasets. Their licensing guide can help determine the license of a dataset.
        \item For existing datasets that are re-packaged, both the original license and the license of the derived asset (if it has changed) should be provided.
        \item If this information is not available online, the authors are encouraged to reach out to the asset's creators.
    \end{itemize}

\item {\bf New assets}
    \item[] Question: Are new assets introduced in the paper well documented and is the documentation provided alongside the assets?
    \item[] Answer: \answerYes{} 
    \item[] Justification: The code is publicly available at \url{https://github.com/aailab-kaist/WASD}.
    \item[] Guidelines:
    \begin{itemize}
        \item The answer \answerNA{} means that the paper does not release new assets.
        \item Researchers should communicate the details of the dataset\slash code\slash model as part of their submissions via structured templates. This includes details about training, license, limitations, etc. 
        \item The paper should discuss whether and how consent was obtained from people whose asset is used.
        \item At submission time, remember to anonymize your assets (if applicable). You can either create an anonymized URL or include an anonymized zip file.
    \end{itemize}

\item {\bf Crowdsourcing and research with human subjects}
    \item[] Question: For crowdsourcing experiments and research with human subjects, does the paper include the full text of instructions given to participants and screenshots, if applicable, as well as details about compensation (if any)? 
    \item[] Answer: \answerNA{} 
    \item[] Justification: The paper does not involve crowdsourcing or experiments with human participants.
    \item[] Guidelines:
    \begin{itemize}
        \item The answer \answerNA{} means that the paper does not involve crowdsourcing nor research with human subjects.
        \item Including this information in the supplemental material is fine, but if the main contribution of the paper involves human subjects, then as much detail as possible should be included in the main paper. 
        \item According to the NeurIPS Code of Ethics, workers involved in data collection, curation, or other labor should be paid at least the minimum wage in the country of the data collector. 
    \end{itemize}

\item {\bf Institutional review board (IRB) approvals or equivalent for research with human subjects}
    \item[] Question: Does the paper describe potential risks incurred by study participants, whether such risks were disclosed to the subjects, and whether Institutional Review Board (IRB) approvals (or an equivalent approval/review based on the requirements of your country or institution) were obtained?
    \item[] Answer: \answerNA{} 
    \item[] Justification: The paper does not involve human subjects research requiring IRB approval.
    \item[] Guidelines:
    \begin{itemize}
        \item The answer \answerNA{} means that the paper does not involve crowdsourcing nor research with human subjects.
        \item Depending on the country in which research is conducted, IRB approval (or equivalent) may be required for any human subjects research. If you obtained IRB approval, you should clearly state this in the paper. 
        \item We recognize that the procedures for this may vary significantly between institutions and locations, and we expect authors to adhere to the NeurIPS Code of Ethics and the guidelines for their institution. 
        \item For initial submissions, do not include any information that would break anonymity (if applicable), such as the institution conducting the review.
    \end{itemize}

\item {\bf Declaration of LLM usage}
    \item[] Question: Does the paper describe the usage of LLMs if it is an important, original, or non-standard component of the core methods in this research? Note that if the LLM is used only for writing, editing, or formatting purposes and does \emph{not} impact the core methodology, scientific rigor, or originality of the research, declaration is not required.
    \item[] Answer: \answerYes{} 
    \item[] Justification: We use GPT-4 and GPT-4o-mini as LLM judges to evaluate generated responses. The evaluation protocols and results are described in \cref{subsec:exp_general_inst,app_subsec:training}.
    \item[] Guidelines:
    \begin{itemize}
        \item The answer \answerNA{} means that the core method development in this research does not involve LLMs as any important, original, or non-standard components.
        \item Please refer to our LLM policy in the NeurIPS handbook for what should or should not be described.
    \end{itemize}

\end{enumerate}

\end{document}